\documentclass[11pt, a4paper]{book}
\usepackage[hyperindex=true,
			bookmarks=true,
            pdftitle={Exploration in Feature Space for Reinforcement Learning}, 
            pdfauthor={Suraj Narayanan Sasikumar},
            colorlinks=true,
            pdfborder={0 0 0},
            pagebackref=false,
            citecolor=blue,
            linkcolor=blue,
            plainpages=false,
            pdfpagelabels,
            pagebackref=true,
            hyperfootnotes=false]{hyperref}
\usepackage[all]{hypcap}
\usepackage[palatino,final]{anuthesis}
\usepackage{afterpage}
\usepackage{graphicx}
\usepackage{thesis}
\usepackage{natbib}
\usepackage[normalem]{ulem}
\usepackage[table]{xcolor}
\usepackage{makeidx}
\usepackage{cleveref}
\usepackage[centerlast]{caption2}
\usepackage{float}
\urlstyle{sf}

\usepackage[T1]{fontenc}
\usepackage[scaled]{beramono}

\usepackage{multirow}

\usepackage{amssymb}
\usepackage{amsthm}
\usepackage{bm}
\usepackage{mathtools}
\usepackage{epigraph}
\usepackage{ragged2e}
\usepackage[linesnumbered,ruled,vlined]{algorithm2e}

\makeatletter
\renewcommand{\algocf@Vline}[1]{
  \strut\par\nointerlineskip
  \algocf@push{\skiprule}
  \hbox{\bgroup\color{cyan}\vrule\egroup%
    \vtop{\algocf@push{\skiptext}
      \vtop{\algocf@addskiptotal #1}\bgroup\color{cyan}\Hlne\egroup}}\vskip\skiphlne
  \algocf@pop{\skiprule}
  \nointerlineskip}
\renewcommand{\algocf@Vsline}[1]{
  \strut\par\nointerlineskip
  \algocf@bblockcode%
  \algocf@push{\skiprule}
  \hbox{\bgroup\color{cyan}\vrule\egroup
    \vtop{\algocf@push{\skiptext}
      \vtop{\algocf@addskiptotal #1}}}
  \algocf@pop{\skiprule}
  \algocf@eblockcode%
}
\makeatother

\usepackage[export]{adjustbox}

\newtheorem{definition}{Definition}

\newtheorem{proposition}{Proposition}
\newtheorem{example}{Example}

\newcommand{\agent}{SARSA$(\lambda)$+$\bm\phi$-EB}

\renewcommand*{\backref}[1]{}
\renewcommand*{\backrefalt}[4]{
  \ifcase #1 %
  \or
    (cited on page #2)%
  \else
    (cited on pages #2)%
  \fi
}


\usepackage{booktabs}
\usepackage{relsize}
\usepackage{xspace}
\usepackage{subfigure}
\usepackage{listings}
\lstloadlanguages{java}
\DeclareGraphicsRule{*}{pdf}{*}{}

\definecolor{tableheadcolor}{rgb}{0.8,0.8,1.0}
\definecolor{tablealtcolor}{rgb}{0.9,0.9,0.95}

\definecolor{todocolor}{rgb}{0.8,0.8,1.0}
\definecolor{fixcolor}{rgb}{1,0.8,0.8}
\definecolor{commentcolor}{rgb}{0.8,1.0,0.8}

\usepackage[color=todocolor, colorinlistoftodos]{todonotes}


%
%

\lstset{
  numbers=left,
  numberstyle=\tiny,
  stepnumber=1,
  numbersep=2em,
  language=java,                         
  basicstyle=\footnotesize\ttfamily,     
  commentstyle=\itshape,                 
  stringstyle=\ttfamily,
}

\newcommand{\textjava}[1]{{\lstset{basicstyle=\ttfamily}\lstinline@#1@}}
\newcommand{\textjavafn}[1]{{\lstset{basicstyle=\footnotesize\ttfamily}\lstinline@#1@}}
\usepackage{setspace}
\usepackage{ifthen}

\long\def\sfootnote[#1]#2{\begingroup%
\def\thefootnote{\fnsymbol{footnote}}\footnote[#1]{#2}\endgroup}
%
%

%
%

\newcommand{\eg}{e.g., }
\newcommand{\ie}{i.e., }

\newcommand{\doi}[1]{\href{http://dx.doi.org/#1}{\nolinkurl{doi:#1}}}
%
%

\newcommand{\ignore}[1]{}

\title{Exploration in Feature Space for Reinforcement Learning}
\author{Suraj Narayanan Sasikumar}
\date{\today}

\makeindex
\begin{document}
\pagestyle{empty}
\thispagestyle{empty}
\begin{titlepage}
  \enlargethispage{2cm}
  \begin{center}
    \makeatletter
    \Huge\textbf{\@title} \\[3.3cm]
    \huge\textbf{\@author} \\[3.3cm]
    \small Under the Supervision of \\
    \large \textbf{Professor\ Marcus Hutter}\\[3.3cm]
    \makeatother   
    \LARGE A thesis submitted for the degree of \\
    Master of Computing (Advanced) \\
    The Australian National University \\[0.5cm]
    \begin{figure}[H]
        \centering
        \includegraphics[scale=0.15]{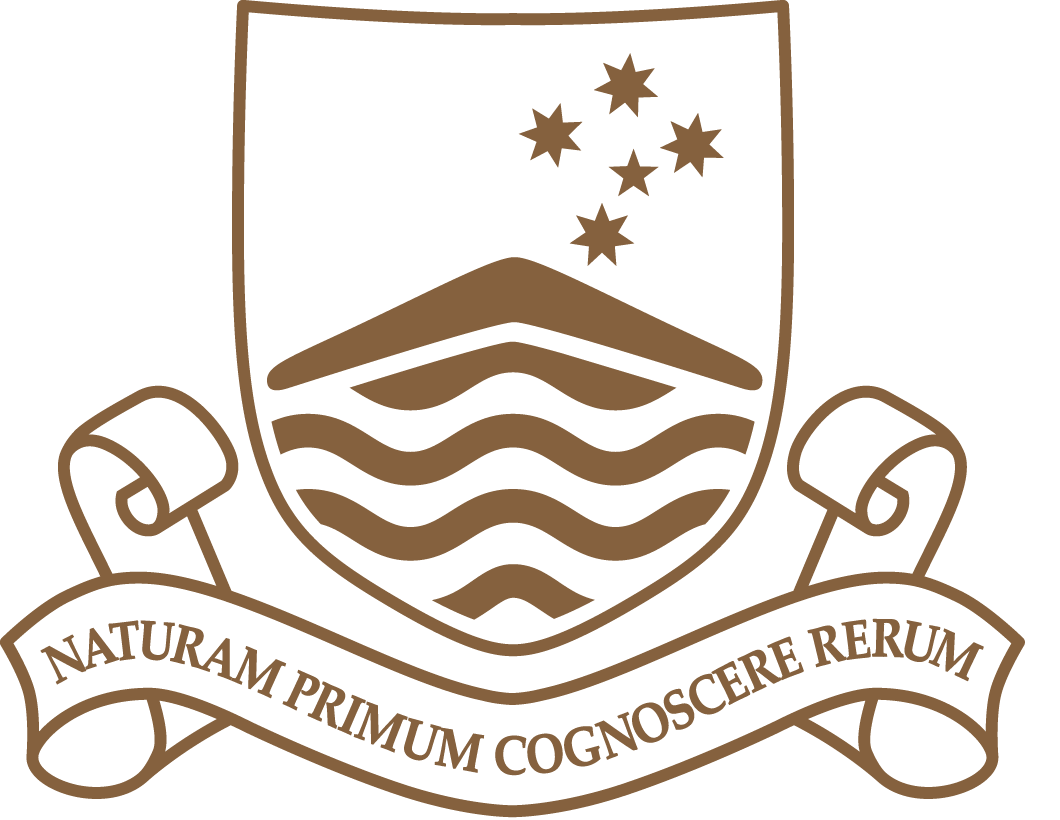}
    \end{figure}
    May 2017    
  \end{center}
\end{titlepage}

\vspace*{14cm}
\begin{center}
  \makeatletter
  \copyright\ \@author{} 2017
  \makeatother
\end{center}
\noindent
\begin{center}
  \footnotesize{~} 
\end{center}
\noindent

\newpage

\chapter*{Declaration}
\addcontentsline{toc}{chapter}{Declaration}

\begin{flushleft}
This thesis is an account of the research undertaken at the Research School of Computer Science, The Australian National University, Canberra, Australia.\vspace*{0.35cm}

The work presented in this thesis represents work conducted between July 2016 and May 2017. This work has been accepted for publication as "Count-Based Exploration in Feature Space for Reinforcement Learning" at the  \textit{26th International Joint Conference on Artificial Intelligence} (IJCAI) Melbourne, Australia, August 19-25 2017 \citep{Martin2017}.\vspace*{0.35cm}

The implementation of the algorithm, and the design of the infrastructure required to empirically evaluate the algorithm, are entirely my own original work. The design of the algorithm itself was done in collaboration with my colleague Jarryd Martin.\vspace*{0.35cm}

Except when acknowledged in the customary manner, the material presented in this thesis is, to the best of my knowledge, original and has not been submitted in whole or part for a degree in any university.
\end{flushleft}

\vspace*{2cm}
\hspace{8cm}\hrulefill\makeatother\par
\hspace{8cm}\makeatletter\@author\makeatother\par
\hspace{8cm}25 May 2017

\vspace*{2cm}

Supervisors:
\begin{itemize}
    \item Professor\ Marcus Hutter (Australian National University)
    \item Tom Everitt (Australian National University)
\end{itemize}

Convenor:
\begin{itemize}
    \item Professor\ John Slaney (Australian National University)
\end{itemize}

\cleardoublepage
\pagestyle{empty}
\vspace*{7cm}
\begin{center}
To my Friends and Family who literally made this possible.
\end{center}

\cleardoublepage
\pagestyle{empty}
\chapter*{Acknowledgments}
\addcontentsline{toc}{chapter}{Acknowledgments}

At the culmination of two years of hard work, I would like to take this opportunity to acknowledge the role they have played in my life.
 
\begin{itemize}
\item My supervisor Marcus Hutter, whose lectures inspired me to pursue Reinforcement Learning. Hearing him talk about Mathematics and AI with childlike enthusiasm has always been an inspiration to me. 
\item Tom Everitt, his steadfast and professional approach to supervising my thesis helped me a great deal in making this a success.
\item Jarryd Martin, my collaborator in this Project. Words cannot express the happiness that I feel, to have a friend in you. The days we toiled over broken theories and random agents were not for nothing. We did it\ldots\ and it could never have been achieved without you. Your determination, motivation and quite frankly the sheer energy is something that I aspire to.
\item John Aslanides, first a huge thank-you for talking your time to provide your comments and feedback on this thesis. Your rationality and cut-to-the-chase attitude. Your clarity of thought, work-ethic and passion for growth has been inspirational for me. 
\item Lulu Huang, for talking care of me throughout this whole endeavour, and being a wonderful roommate.
\item Boris Repasky, the man with infinite rigour. All the arguments and debates we've had throughout the year has only been for the better. Sina Eghbal, for being there through tough times as an unassuming friend. Darren Lawton, for forcing me to play Basketball. My AI Labmates, Sultan Javed, Owen Cameron, Arie Slobbe, Elliot Catt, for the motivation and support.
\item My father Sasikumar, sister Sumitha, for believing in me and being a vocal supporter of my decisions. My in-laws Rasmi, M Vasudevan, and Geetha, for supporting my decision to study. I miss you all very much.
\item My wife Ramya Vasudevan, her sacrifices and compromises are the reason I am able to do what I want to do, and I am forever indebted. None of this would even make sense without you in my life.
\item My mother Sudhamony whose guidance and sacrifices made the man I am. Through out my life she has been a constant source of moral guidance. I am a better person because of her.
\end{itemize}

\cleardoublepage
\pagestyle{headings}
\chapter*{Abstract}
\addcontentsline{toc}{chapter}{Abstract}
\vspace{-1em}

The infamous exploration-exploitation dilemma is one of the oldest and most important problems in reinforcement learning (RL). Deliberate and effective exploration is necessary for RL agents to succeed in most environments. However, until very recently even very sophisticated RL algorithms employed simple, undirected exploration strategies in large-scale RL tasks.

We introduce a new optimistic count-based exploration algorithm for RL that is feasible in high-dimensional MDPs. The success of RL algorithms in these domains depends crucially on generalization from limited training experience. Function approximation techniques enable RL agents to generalize in order to estimate the value of unvisited states, but at present few methods have achieved generalization about the agent's uncertainty regarding unvisited states. We present a new method for computing a generalized state visit-count, which allows the agent to estimate the uncertainty associated with any state. 

In contrast to existing exploration techniques, our $\bm\phi$-\textit{pseudocount} achieves generalization by exploiting the feature representation of the state space that is used for value function approximation. States that have less frequently observed features are deemed more uncertain. The resulting $\bm\phi$-\textit{Exploration-Bonus} algorithm rewards the agent for exploring in feature space rather than in the original state space. This method is simpler and less computationally expensive than some previous proposals, and achieves near state-of-the-art results on high-dimensional RL benchmarks. In particular, we report world-class results on several notoriously difficult Atari 2600 video games, including Montezuma's Revenge.

\cleardoublepage
\pagestyle{headings}
\addcontentsline{toc}{chapter}{Contents}
\markboth{Contents}{Contents}
\tableofcontents
\listoffigures
\addcontentsline{toc}{chapter}{List of Figures}
\listoftables
\addcontentsline{toc}{chapter}{List of Tables}
\listofalgorithms
\addcontentsline{toc}{chapter}{List of Algorithms}

\mainmatter

\chapter{Introduction}
\label{cha:intro}

\epigraph{\textit{`No great discovery was ever made without a bold guess.'}}{Isaac Newton}


\section{Reinforcement Learning}
\textit{Machine Learning} is a field in computer science that allows computers to dynamically generate novel algorithms that otherwise cannot be explicitly programmed. These algorithms, called \textit{hypotheses}, generalize patterns and regularities from observed real-world data using statistical techniques~\citep{Bishop2007}. \textit{Reinforcement Learning} (RL) is a field of machine learning that deals with optimal sequential decision making in an unknown environment with no explicitly labelled training data. The RL framework is one of the fundamental models that best describes how intelligent beings interact with their world to achieve a \textit{goal}. An RL algorithm is given agency to interact with its surroundings, and is aptly called an \textit{agent}. The world with which the agent interacts is called its \textit{environment}. The \textit{unsupervised} nature of RL algorithms means that the agent has to develop a \textit{policy} for acting in an unknown environment by trial-and-error~\citep{Sutton1998}. In every such interaction the agent performs an action on the environment and receives a \textit{percept}. The percept consists of the current configuration of the environment, called \textit{state}, and a scalar feedback signal, called \textit{reward}. The reward signal indicates how good the sequence of \textit{actions} of the agent was. The \textit{goal} of an RL agent is based on the concept of the \textit{reward hypothesis}:
\begin{definition}[Reward Hypothesis]~\citep{Sutton1999}
    Any notion of a \textit{goal} or \textit{purpose} of an intelligent agent can be described as the maximization of expected cumulative reward.
\end{definition}
The existence of an extrinsic feedback signal makes RL algorithms also somewhat supervised in nature - thus RL algorithms are in some sense both supervised and unsupervised~\citep{Barto2004}.

As an example, consider an agent playing a car racing game in which the goal is to reach the finish line as soon as possible. To model the goal as a cumulative reward maximization problem, we give the agent a negative reward every time step, thereby incentivizing the agent to reach the finish line as quickly as possible. This example illustrates how an objective can be modelled as the maximization of expected cumulative reward, and the \textit{goal} of an agent as a sequence of actions that achieves it. The interaction between agent and environment continues until the agent converges to an optimal sequence of actions for each state in the environment. This interaction is called the agent-environment interaction cycle, as illustrated in \cref{fig:agent_env}. Each iteration of the interaction is called a time-step, often denoted by the subscript $t$ to distinguish states, actions, and percepts between time-steps.

\begin{figure}[H]
  \label{fig:agent_env}
  \centering
  \includegraphics[scale=0.7]{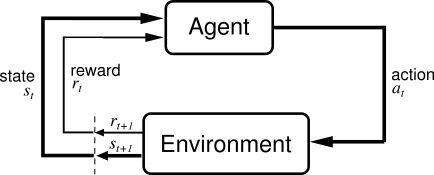}
  \caption{The agent-environment interaction cycle~\citep{Sutton1998}}
\end{figure}

\section{The Exploration/Exploitation Dilemma}\label{dilemma}
In an online decision-making setting such as the reinforcement learning problem, an agent is faced with two choices - \textit{explore} or \textit{exploit}. The term \textit{exploration} in an active learning system is defined as the process of deliberately taking a non-greedy action with the sole aim of gathering more information about the environment. Exploration plays a fundamental role in reinforcement learning algorithms. It is born out of the notion that an optimal long-term policy might involve short-term sacrifices. Alternatively, \textit{exploitation} is the act of taking the best possible action given the current information about the environment. A central challenge in reinforcement learning is to find the sweet spot between exploration and exploitation, \ie to figure out when to explore and when to exploit. This problem is known as the \textit{exploration-exploitation dilemma}. 

At present there are a number of provably efficient exploration methods that are effective in environments with low-dimensional state-action spaces. Most of the exploration algorithms which enjoy strong theoretical guarantees implement the so-called "Optimism in the Face of Uncertainty" (OFU) principle. This heuristic encourages the agent to be optimistic about the reward it might attain in less explored parts of the environment. The agent seeks out states with higher associated uncertainty, and in doing so reduces its uncertainty in a very efficient way. Many algorithms that implement this heuristic do so by adding an exploration bonus to the agent's reward signal. This bonus is usually a function of a state visit-count; the agent receives higher exploration bonuses for exploring less frequently visited states (about which it is less certain).

Unfortunately, these algorithms do not scale well to high-dimensional environments. In these domains, the agent can only visit a small portion of the state space while it is training. The visit-count for most states is always zero, even after training is finished. Nearly all states will be assigned the same exploration bonus throughout training. This renders the bonus useless as a tool for efficient exploration. All unvisited states appear to the agent as equally uncertain. This problem arises because these count-based OFU algorithms fail to generalise the agent's uncertainty from one context to another. Even if an unvisited state has very similar features to a frequently visited one, the agent will treat the former as a complete unknown. Consequently even the sophisticated algorithms that are suitable for the high-dimensional setting  -- e.g. those that use deep neural networks for policy evaluation -- tend to use simple, inefficient exploration strategies. 

Success in the high-dimensional setting demands that the agent represent the state space in a way that allows generalisation about uncertainty. This sort of generalisation would allow that the agent's uncertainty be lower for states with familiar features, and higher for states with novel features, even if those exact states haven't been visited. What we require, then, is an efficient method for computing a suitable similarity measure for states. That is the key challenge addressed in this thesis.

\section{Summary of Contributions}
This thesis presents a new count-based exploration algorithm that is feasible in environments with large state-action spaces. It can be combined with any value-based RL algorithm that uses linear function approximation (LFA). The principal contribution is a new method for computing generalised visit-counts. Following \cite{Bellemare2016}, we construct a visit-density model in order to measure the similarity between states. Our approach departs from theirs in that we do not construct our density model over the raw state space. Instead, we exploit the feature map that is used for value function approximation, and construct a density model over the transformed feature space. This model assigns higher probability to state feature vectors that share features with visited states. Generalised visit-counts are then computed from these probabilities; states with frequently observed features are assigned higher counts. These counts serve as a measure of the uncertainty associated with a state. Exploration bonuses are then computed from these counts in order to encourage the agent to visit regions of the state-space with less familiar features.

Our density model can be trivially derived from any feature map used for LFA, regardless of the application domain, and requires little or no additional design. In contrast to existing algorithms, there is no need to perform a special dimensionality reduction of the state space in order to compute our generalised visit-counts. Our method uses the same lower-dimensional feature representation to estimate value and to estimate uncertainty. This makes it simpler to implement and less computationally expensive than some existing proposals. Our evaluation demonstrates that this simple approach achieves near state-of-the-art performance on high-dimensional RL benchmarks.

\chapter{Background and Related Work}
\label{cha:background}

\epigraph{\textit{`Each night, when I go to sleep, I die. And the next morning, when I wake up, I am reborn.'}}{Mahatma Gandhi}

In this chapter we give a formal treatment of the RL problem. We then provide a taxonomy of RL algorithms and the challenges posed by classical RL algorithms. Further, we talk about relevant research findings on how to solve these challenges.

\section{Classical Reinforcement Learning}
\label{sec:rl_primer}
In Classical RL (CRL), the environment is assumed to be fully observable, ergodic, and every state has the \textit{Markov property}. The branch of reinforcement learning where these assumptions are lifted is called General Reinforcement Learning (GRL)~\citep{Hutter2005}.
\begin{definition}[Markov property]
    Future states are only dependent on the current states and action, and are independent of the history of percepts. Formally,
    \[P(s_{t+1}=s',r_{t+1}=r \mid s_t,a_t,r_t,s_{t-1},a_{t-1},\ldots,r_1,s_0,a_0) = P(s_{t+1}=s',r_{t+1}=r \mid s_t,a_t)\]
    for all $s',r$, and histories $s_t,a_t,r_t,s_{t-1},a_{t-1},\ldots,r_1,s_0,a_0$
\end{definition}
A \textit{Markov Decision Process} (MDP) captures the above assumptions about the environment, and so in the CRL context the environment is modelled as an MDP~\citep{Puterman1994}. Thus, the CRL problem now reduces to the problem of finding an optimal policy for an unknown MDP\@.
\begin{definition}[Markov Decision Process]
A Markov Decision Process is a Tuple $\langle\mathcal{S},\mathcal{A},\mathcal{P},\mathcal{R},\gamma\rangle$ representative of a fully-observable environment in which all states are Markov.
\begin{itemize}
    \item $\mathcal{S}$ is a finite set of states
    \item $\mathcal{A}$ is a finite set of actions
    \item $\mathcal{P}_{ss'}^a = \mathbb{P}[s_{t+1}=s'\mid s_t=s, a_t=a]$ are the transition probabilities
    \item $\mathcal{R}_{ss'}^a = \mathbb{E}[r_{t+1}\mid s_t=s, a_t=a,  s_{t+1}=s']$ is the expected value of the reward resulting from the transition $s,a,s'$
    \item $\gamma$ is the discount factor which weights the relative importance of immediate rewards to future rewards.
\end{itemize}
\end{definition}
If the dynamics (transition and reward distributions) of the MDP are known, then we can use dynamic programming methods to directly plan on the MDP to find an optimal policy. In the RL context, in which the system dynamics are unknown, we have to use iterative RL algorithms such as TD-learning~\citep{Sutton1988} to find a good policy asymptotically.\footnote{Asymptotic analysis is one of the few theoretical tools we have to analyse RL algorithms in a domain-agnostic way.}
\begin{definition}[Policy]
    A policy may be deterministic or stochastic. A deterministic policy is a mapping from the states to actions.
    \[\pi: \mathcal{S} \rightarrow \mathcal{A}\]
    A stochastic policy is a probability distribution over the set of actions given a state.
    \[\pi(a\mid s_t=s)\]
\end{definition}

\subsubsection*{Value}
The most common way to characterize the quality of a given policy is to define a function that computes how valuable it is to follow the policy from a given state (or state-action pair). This notion of value is expressed in terms of future rewards the agent could expect, if it had chosen to follow the given policy.


\begin{definition}[State-Value Function]
    The state-value function, $V^\pi(s)$ is a mapping from states to $\mathbb{R}$. The value of a state $s\in\mathcal{S}$ under policy $\pi$ is the expected discounted cumulative reward given that the agent starts in state $s$ and follows policy $\pi$ thereafter.
    \[V^\pi(s) = \mathbb{E}_\pi\Bigg[\sum_{k=0}^{\infty} \gamma^k r_{t+k+1}\mid s_t=s\Bigg]\]
\end{definition}
\begin{definition}[Action-Value Function]
    The action-value function, $Q^\pi(s,a)$ is a mapping from state-action pairs to $\mathbb{R}$. The action-value of the state-action pair $(s,a)$ under policy $\pi$ is the expected discounted cumulative reward given that the agent starts in state $s$, takes action $a$, and follows policy $\pi$ thereafter.
    \[Q^\pi(s,a) = \mathbb{E}_\pi\Bigg[\sum_{k=0}^{\infty} \gamma^k r_{t+k+1}\mid s_t=s,a_t=a\Bigg]\]
\end{definition}

\subsubsection*{Bellman Equations}
Bellman equations form the basis for how to compute, approximate, and learn value functions in the RL setup~\citep{Sutton1998}. They arise naturally from the structure of an MDP by capturing the recursive relationship between the value of a state and the value of its successor states. The two Bellman equations for the state-values and action-values can be defined as follows.
\begin{definition}[Bellman Equation for state-value function of an MDP]
    \[V^\pi(s) = \sum_{a\in\mathcal{A}}\pi(a\mid s)\Bigg[\sum_{s'\in\mathcal{S}}\mathcal{P}_{ss'}^a\Big[R_{ss'}^a + \gamma V^\pi(s')\Big]\Bigg]\]
\end{definition}
\begin{definition}[Bellman Equation for action-value function of an MDP]
    \[Q^\pi(s, a) = \sum_{s'\in \mathcal{S}}\mathcal{P}_{ss'}^a\Big[\mathcal{R}_{ss'}^a + \gamma\sum_{a'\in\mathcal{A}}\pi(a'\mid s')Q^\pi(s',a')\Big]\]
\end{definition}
We can now use the value function to define a partial ordering over policies. A policy is said to be better than another when the expected return of one policy is greater than or equal to the other for all states. Formally, $\pi \succsim \pi' \iff V^\pi(s) \geq V^{\pi'}(s)\quad \forall s\in \mathcal{S}$. From the imposed partial ordering it has been shown that there exists at least one policy, $\pi^*$, such that $\pi^* \succsim \pi$ for all policies $\pi$, although it might not be unique~\citep{Bertsekas1996}. The Bellman Optimality Equations provide a mathematical framework for talking about the optimal policy just by replacing the sum over actions with a $\max$ operator. Intuitively, this represents a policy that is greedy with respect to the value of its successor states.
\begin{definition}[Bellman Optimality Equation for state-values]
    \[V^{\pi^*}(s)\equiv V^*(s) = \max_{a\in\mathcal{A}}\sum_{s'\in\mathcal{S}}\mathcal{P}_{ss'}^a\Big[\mathcal{R}_{ss'}^a + \gamma V^*(s')\Big]\]
\end{definition}
\begin{definition}[Bellman Optimality Equation for action-values]
    \[Q^{\pi^*}(s,a)\equiv Q^*(s,a) = \sum_{s'\in\mathcal{S}}\mathcal{P}_{ss'}^a\Big[\mathcal{R}_{ss'}^a + \gamma \max_{a'\in\mathcal{A}}Q^*(s',a')\Big]\]
\end{definition}
For finite MDPs with known environment dynamics, the Bellman Optimality Equations have a unique solution. Unfortunately in the RL setup we deal with an unknown MDP\@. Thus, almost all of the RL algorithms approximate the Bellman Optimality Equations for an unknown MDP and try to iteratively find an optimal policy asymptotically.

\subsection{Reinforcement Learning Algorithms}
\label{sec:rl_algs}
The fundamental difference between an RL problem and a \textit{planning} problem is the knowledge of the environment dynamics. In a planning problem the model of the environment is already known and the problem boils down to finding an optimal policy in the environment. In an RL problem, the agent is dropped into an unknown environment the dynamics of which is unknown. This makes reinforcement learning a hard problem. This distinction gives rise to two categories of RL algorithms, namely \textit{model-based} and \textit{model-free}.

The class of algorithms that learns the model of the environment, and then does planning within the learned model are called \textit{model-based} RL algorithms. These algorithms learn the transition probabilities ($\mathcal{P}_{ss'}^a$) and reward functions ($\mathcal{R}_{ss'}^a$) of the MDP by iteratively simulating the environment and updating the simulation to better represent the true environment. This approach to solve unknown MDP's is computationally intensive, especially in large or continuous problems. Value iteration and policy iteration are two dynamic programming algorithms that have a planning-based approach to the RL problem. On the other hand, \textit{model-free} algorithms directly learn the optimal policy using an intermediary quantity (usually the value-function).

\subsubsection*{Generalized Policy Iteration (GPI)}
The overarching theme of almost all value-function based CRL algorithms is the back-and-forth between two interacting processes, \textit{prediction} and \textit{control}, eventually resulting in convergence. \textit{Prediction} refers to policy-evaluation where the value-function is estimated for the current policy. \textit{Control} on the other hand aims to find a policy that is greedy with respect to the current value-function (state-value or action-value).
\begin{figure}[H]
\label{fig:gpi_conv}
  \centering
  \includegraphics[scale=0.7]{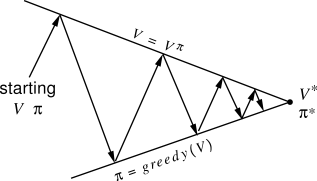}
  \caption{Generalized Policy Iteration~\citep{Sutton1998}}
\end{figure}

The `Prediction and Control' process converges when it produces no significant change, that is, the value-function is consistent with the current policy and the policy is greedy with respect to the current value-function.

\subsubsection*{Temporal Difference Learning}

Temporal Difference learning (TD learning) is a common RL algorithm; it is a model-free algorithm that combines Monte Carlo methods with the ideas from dynamic programming. TD learning allows the agent to directly learn from its experience of the environment. Following the GPI theme, we need a strategy for prediction and control. In TD prediction we use the sampling of Monte Carlo methods and bootstrapping (updating from an existing estimate) of DP algorithms to estimate the current value-function.

\begin{definition}[Update formula for state-value function]
    \[V(s_t) \leftarrow V(s_t) + \alpha[r_{t+1} +\gamma V(s_{t+1}) - V(s_t)]\]
\end{definition}
TD(0) is a TD learning algorithm that updates state-values after each time-step, so the learning process is fast and on-line. The target for the TD(0) update formula uses the existing estimate of $V(s_{t+1})$, hence we say the algorithm bootstraps.

As the agent interacts with the environment more, TD learning is able to generate a better estimate of the value-functions. In the limit, if each state (or state-action pair) is visited infinitely often with some additional constraints on the learning rates, convergence to the true value-function is guaranteed~\citep{Bertsekas1996}.

In TD control we want to optimize the value-function of an unknown environment. There are two classes of policy control methods, namely, \textit{on-policy} and \textit{off-policy}. On-policy control uses the policy derived from the current value-function estimate to update the future estimates. Alternatively, off-policy control uses a policy that is greedy with respect to the current value-function to estimate future value-functions. $SARSA$ (on-policy) and $Q$-learning (off-policy) are two popular TD control algorithms that are known to learn an MDP asymptotically~\citep{Sutton1998,Watkins1992}.

The important concept of why we are able to do model-free TD control lies in the fact that we use (state,action)-value functions instead of state-value functions.
\begin{definition}[Greedy policy control] Policy improvement is done by considering a new policy, $\pi'$, which is greedy with respect to the current value-function.
    \[\pi'(s) = \arg\max_{a\in \mathcal{A}}\mathcal{R}_s^a + \mathcal{P}_{ss'}^a V(s')\qquad(\textit{Greedy w.r.t. state-value function})\]
    \[\pi'(s) = \arg\max_{a\in \mathcal{A}} Q(s,a)\qquad\qquad\quad\;\;\;(\textit{Greedy w.r.t. action-value function})\]
\end{definition}

From the above policy improvement equations we can see that in order to be greedy with respect to the state-value function, we require the model of the MDP\@. In contrast, if the policy is greedy with respect to the action-value function, the model dynamics of the MDP is not needed, and hence, it is \textit{model-free}. Thus, optimizing action-value functions to learn the optimal policy is at the heart of all model-free TD control algorithms.

\subsubsection*{Challenges and Drawbacks}
All the Classical Reinforcement Learning algorithms that we discussed above can be categorized as \textit{tabular} algorithms. That is, the algorithms use a table data structure to associate each state (or state-action pair) with its current value estimate. As the agent interacts with the environment and gains experience, the table values are updated with better estimates of its value.

The main drawback of such a method is that it scales poorly. When the state-space is very large or continuous, the fundamental requirement that the agent visits each state (or state-action pair) multiple times (or infinitely often) is not satisfied; states are at most visited once. An agent following a policy derived from these value estimates would do no better than a random policy. Moreover, the table size grows with the number of states, making storage infeasible for problems with large/continuous state-space.

A common approach to solving this problem is to find a way to \textit{generalize} the value-function from the limited experience of the agent~\citep{Sutton1998}. That is, we want to approximate the value-function for an unseen state (or state-action pair) from the example values it has observed so far. \textit{Function approximation} is a generalization technique that does exactly this; it takes in observed values of a desired function and attempts to generalize an approximation of the function.

\subsection{Function Approximation}
\label{sec:fa}
Function approximation (FA) is an instance of supervised learning~\citep{Sutton1998}. It is viewed as a class of techniques used to approximate functions by using example values of the desired function. In the RL context tabular methods become infeasible in large or continuous state spaces. This challenge is mitigated by employing FA techniques to predict the value-function at unseen states. However, not all FA methods are applicable to the RL setting. We require a training method which can learn efficiently from on-line, non-i.i.d.\ data, and also handle non-stationary target functions. The following are some of the function approximators that are used in the RL context.


\begin{itemize}
    \item Gradient-Descent Methods
    \begin{itemize}
        \item Artificial Neural Networks
        \item Linear Combination of Features
    \end{itemize}
    \item State Aggregation
    \begin{itemize}
        \item k-Nearest Neighbors
        \item Soft Aggregation
    \end{itemize}
\end{itemize}

State aggregation is a method of generalizing function approximation in which states are grouped based on a criterion and then value is estimated as an attribute of the group. When a state is re-visited the value corresponding to the state's group gets updated.

Linear combination of features, also known as \textit{Linear Function Approximation} (LFA), is essentially a linear mapping from the state space (of dimensionality $D$) to a feature space of dimension $M$, where often $M<D$. Each basis function of the feature space is a mapping from the state space to a real-valued number that represents some feature of the state-space.
\begin{definition}[Linear-Approximate state(action)-value function] The approximate state-value function of a state $s\in \mathcal{S}$ under a policy $\pi$ is given by:
\begin{align*}
    \hat V^\pi(s) &= \bm{\theta}^T \bm{\phi}(s)=\sum_{i=1}^{M}\theta_i\phi_i(s)\\
    \hat Q^\pi(s, a) &= \bm{\theta}^T \bm{\phi}(s, a)=\sum_{i=1}^{M}\theta_i\phi_i(s, a)
\end{align*}

Where $\bm{\phi}:\mathcal{S}(\times \mathcal{A})\rightarrow \mathbb{R}^M$, is a feature map, and $\bm{\theta}\in \mathbb{R}^M$ is the parameter vector.
\end{definition}

LFA has sound theoretical guarantees and also is very efficient in terms of both data and computation~\citep{Sutton1998}, making it a good candidate for the implementation of our algorithm.

As mentioned previously, FA can be regarded as a technique to develop a generalization regarding value. In order to have a good capacity to generalize, a function approximator must have relevant data about the state-space. Consider a pathological case in which the agent does not explore at all: as a result the only data available for FA would be concentrated in one region of the state space. This results in the estimation of values of unseen states being highly biased. In order to avoid this problem we have to make sure that the agent visits most regions of the state-space, that is, the agent has to explore the state-space efficiently. The main goal of this thesis is to address the problem of how to explore efficiently in large state-spaces.

\section{Exploration Strategies for Reinforcement Learning}
In \cref{dilemma} we described the \textit{exploration/exploitation dilemma}, which is a fundamental problem in RL\@. All exploration strategies attempt to manage the trade-off between these two often opposed objectives. The simplest and most widely-used exploration strategy is known as $\epsilon$-greedy. At each time-step $t$ the agent chooses a greedy action with probability $1-\epsilon$ and with $\epsilon$ probability the agent chooses a completely random action. To ensure that the policy converges to the optimal policy it has to satisfy the \textit{GLIE} assumptions ~\citep{Singh2000a}: 
\begin{definition}[Greedy in the Limit with Infinite Exploration] A policy is \textit{GLIE} if it satisfies the following two assumptions. 

\begin{itemize}
    \item Each action is taken infinitely often in every state that is visited infinitely often,
    \[\lim_{t\rightarrow\infty} N_t(s,a) = \infty\]
    Where $N_t(s,a)$ is the number of times action $a$ has been chosen in state $s$ up-to time-step $t$.
    \item In the limit, the learning policy is greedy with respect to the Q-value function with probability $1$.
    \[\lim_{t\rightarrow\infty}\pi_t(a\mid s) = 1, \textit{when, } a = \arg\max_{a'\in\mathcal{A}} Q_t(s,a')\]
\end{itemize}
\end{definition}

For example, $\epsilon$-greedy satisfies the GLIE assumptions when $\epsilon$ is annealed to zero. A common way to do this is by setting $\epsilon_t\propto 1/t$.

In small, finite MDPs $\epsilon$-greedy satisfies the GLIE assumptions, but when the state-action space is large/continuous the first GLIE assumption is violated and hence the convergence guarantee is lost. $\epsilon$-greedy is a na\"{i}ve approach to solving the exploration problem, but we still use it in large MDPs because of its low resource requirements when compared with  alternatives ~\citep{Bellemare2016}. In this thesis we propose a novel exploration strategy that improves upon $\epsilon$-greedy, and provides state-of-the-art results in large problems with low computational overhead. 

We now provide an exposition of various explorations strategies, their foundational principles, and an analysis of recent breakthroughs in the field of exploration.

\subsection{Taxonomy of Exploration Strategies}
The exploration-exploitation dilemma is still an open problem, but researchers have made significant inroads into understanding the nature of the problem. Sebastian Thrun classified exploration techniques into two families of exploration schemes, \textit{directed} and \textit{undirected} ~\citep{Thrun1992a}. Undirected exploration strategies do not use any information from the environment to make an informed exploratory action; they predominantly rely on randomness to do exploration. Softmax methods and $\epsilon$-greedy are examples of undirected exploration techniques.  The \textit{softmax} action is sampled from the \textit{Boltzman distribution}

$$\text{Boltz}_s(a) = \frac{\exp(Q(s,a))}{\sum_{a'\in\mathcal{A}}\exp(Q(s,a'))}.$$

On the other hand, directed exploration strategies use the knowledge about the learning process to form an exploration-specific \textit{heuristic} for action selection. This heuristic directs the agent to take those actions that maximizes the information gain about the environment. The exploration algorithm introduced in this thesis falls into the category of directed exploration algorithms. In order to put it into context, we first present an overview of the existing directed exploration strategies used in the literature.


\section{The Optimism in the Face of Uncertainty Principle}
In the following chapter we present our directed exploration method, which implements the principle of "Optimism in the Face of Uncertainty" (OFU) as a heuristic for exploration. In this section we review existing work  on the OFU heuristic. The principle is succinctly captured in \cite{Osband2016}:

\begin{quote}
"When at a state, the agent assigns to each action an optimistically biased while statistically plausible estimate of future value and selects the action with the greatest estimate."
\end{quote}

OFU is a heuristic to direct exploratory actions. OFU directs the agent to take actions which have more uncertain value estimates. Instead of greedily taking the action that has the highest estimated value, that agent is encouraged to take actions which have a high \emph{probability} of being optimal. To see that an apparently suboptimal action may indeed have a high probability of being optimal, let us take an example. Suppose that the agent has taken an action $a\in{\mathcal{A}}$ very often from a particular state $s\in\mathcal{S}$, and suppose that $a$ also currently has the highest value-estimate $\hat{Q}^{\pi}(s,a)$ among the available actions. Now consider an alternative action $a'\in \mathcal{A}$ that has only been tried once from the state $s$, and suppose that the reward received was lower than $\hat{Q}^{\pi}(s,a)$. Action $a$ has higher estimated value, but having tried it many times, the agent's uncertainty about its value is quite low. In contrast, the uncertainty about the value of the alternative action $\hat{Q}^{\pi}(s,a')$ is very high, since it has been taken so rarely. Thus, while the current estimate $\hat{Q}^{\pi}(s,a')$ may be lower than $\hat{Q}^{\pi}(s,a)$, there is a good chance that the agent was unlucky when taking $a'$ the first time, and that the true action-value $Q^{\pi}(s,a')$ is much higher than both estimates. Thus it may be that $a'$ has a higher probability of being the optimal action than does $a$, especially if their estimated values are quite close. The OFU heuristic would bias the agent toward taking action $a'$ instead of the greedy action $a$. An agent following this heuristic will behave as if it is optimistic about action $a'$, or more precisely, about its true action-value $Q^{\pi}(s,a')$. This optimism drives the agent to explore regions of the environment about which it is more uncertain.

\subsection{OFU using Count-Based Exploration Bonuses}
Most of the exploration algorithms that enjoy strong theoretical efficiency guarantees, implement the OFU heuristic. Many do so by augmenting the estimated value of a state(-action pair) with an exploration bonus that quantifies the uncertainty in that value estimate. An agent which acts greedily with respect to this augmented value function will be biased to take actions with higher associated uncertainty. Most of these algorithms are \emph{tabular} and \emph{count-based} in that they compute their exploration bonuses using a table of state(-action) visit-counts. The visit-count serves as an approximate measure of the uncertainty associated with a state(-action), because more novel state(-action) pairs will have lower visit-counts. State(-actions) with lower visit counts are assigned higher exploration bonuses. This drives the agent to behave optimistically and explore less frequently visited regions of the environment, which may yet prove to have higher value than familiar regions. Moreover, even if those regions turn out to yield little reward when explored, the agent will have greatly reduced its uncertainty about those regions. Indeed, the reduction in uncertainty would be much smaller if the agent were to take an action that had already been tried many times. The OFU heuristic is therefore a win-win approach for the agent. OFU algorithms are more efficient than undirected exploration strategies like $\epsilon$-greedy because the agent avoids actions that yield neither large rewards nor large reductions in uncertainty \citep{Osband2016a}. 

\subsection{Tabular Count-based Exploration Algorithms}
One of the best known OFU methods is the UCB1 bandit algorithm, which selects an action that maximises an upper confidence bound $\hat{Q}_{t}(a)+\sqrt{\frac{2\log t}{N(a)}}$, where $\hat{Q}_{t}(a)$ is the estimated mean reward and $N(a)$ is the visit-count \citep{Lai1985}. The dependence of the bonus term on the inverse square-root of the visit-count is justified using Chernoff bounds. In the MDP setting, the tabular OFU algorithm most closely resembling our method is Model-Based Interval Estimation with Exploration Bonuses (MBIE-EB) \citep{Strehl2008}.\footnote{To the best of our knowledge, the first work to use exploration bonuses in the MDP setting was the Dyna-$Q$+ algorithm, in which the bonus is a function of the recency of visits to a state, rather than the visit-count \citep{Sutton1990}} Empirical estimates $\hat{\mathcal{P}}$ and $\hat{\mathcal{R}}$ of the transition and reward functions are maintained, and $\mathcal{\hat{R}}(s,a)$ is augmented with a bonus term $\frac{\beta}{\sqrt{N(s,a)}}$, where $N(s,a)$ is the state-action visit-count, and $\beta\in\mathbb{R}$ is a theoretically derived constant. The Bellman optimality equation for the augmented action-value function is $$\tilde{Q}^{\pi}(s,a)=\mathcal{\hat{R}}(s,a)+\frac{\beta}{\sqrt{N(s,a)}}+\ \gamma\sum_{s'}\hat{\mathcal{P}}(s'\mid s,a)\max_{a'\in\mathcal{A}}\tilde{Q}^{\pi}(s',a')$$ Here the dependence of the bonus on the inverse square-root of the visit-count is provably optimal \citep{Kolter2009}. This equation can be solved using any MDP solution method.

While tabular OFU algorithms perform well in practice on small MDPs \citep{Strehl2004}, their \textit{sample complexity} becomes prohibitive for larger problems \citep{Bellemare2016}. The sample complexity of an algorithm is a bound on the number of timesteps at which the agent is not taking an $\epsilon$-optimal action with high probability \citep{Kakade2003}. Loosely speaking, it measures the amount of experience the agent must have before one can be confident it is basically performing optimally. MBIE-EB, for example, has a sample complexity bound of $\tilde{O}\big(\frac{\left|\mathcal{S}\right|^{2}\left|\mathcal{A}\right|}{\epsilon^{3}(1-\gamma)^{6}}\big)$. In the high-dimensional setting -- where the agent cannot hope to visit every state during training -- this bound offers no guarantee that the trained agent will perform well. 
The prohibitive complexity of these tabular OFU algorithms is due in part to the fact that a table of visit-counts is not useful if the state-action space is too large. Since the agent will only visit a small fraction of that space, the visit-count for most states will always be zero. These algorithms are therefore unable to usefully compare the novelty of two unvisited states. All unvisited states have the same visit-count, and hence the same exploration bonus. The optimistic agent will treat them all as equally novel and equally appealing.

\subsection{Generalized Visit-counts for Exploration in Large MDPs}
\label{sec:large_mdps}
Tabular OFU algorithms fail on high-dimensional problems because they do not allow for generalization across the state space regarding uncertainty. Every unvisited state is treated as entirely novel, regardless of any similarity between the unvisited states and the visited states in the history. In order to explore efficiently in large domains, the agent must be able to make use of the fact that some unvisited states share many features with visited states, while others share very few. If an unvisited state has almost exactly the same features as a very frequently visited one, then it should not be considered to be as uncertain as a state with unfamiliar features. An effective OFU method for these problems would not just encourage the agent to visit unvisited states, but rather would drive the agent to visit states with novel or uncommon features. We discuss this issue further in section \cref{sec:conf_novelty}.

Several very recent extensions of count-based exploration methods have achieved this sort of generalisation regarding uncertainty, and have produced impressive results on high-dimensional RL benchmarks. These algorithms closely resemble MBIE-EB, but they substitute the state-action visit-count for a \emph{generalised visit-count} which quantifies the similarity of a state to previously visited states. \cite{Bellemare2016} construct a Context Tree Switching (CTS) density model over the state space such that higher probability is assigned to states that are more similar to visited states \citep{Veness2012}. A state pseudocount is then derived from this density. A subsequent extension of this work replaces the CTS density model with a neural network \citep{Ostrovski2017}. Another recent proposal uses locality sensitive hashing (LSH) to cluster similar states, and the number of visited states in a cluster serves as a generalised visit-count \citep{Tang2016}. As in the MBIE-EB algorithm, these counts are used to compute exploration bonuses. These three algorithms outperform random strategies, and are currently the leading exploration methods in large discrete domains where exploration is hard.

Before presenting our optimistic count-based exploration method in the following chapter, we now briefly canvas two alternative frameworks for directed exploration, and discuss their limitations.

\section{Bayes-Adaptive RL}

In the Bayesian approach to model-based reinforcement learning, we maintain a posterior distribution over the possible models of the environment given the experience of the agent ~\citep{Dearden1998}. Bayesian inference is used to update the posterior with new information as the agent interacts with the environment, and also to incorporate the agent's prior distribution over the transition models.

Since the posterior is maintained over all possible models we can now talk about the uncertainty pertaining to what is the best action to take. This uncertainty is modelled as a Markov Decision Process defined over a set of \textit{hyper-states}. A hyper-state acts as an information state which summarizes the information accumulated so far. This augmented MDP, often referred to as the Bayes-Adaptive MDP (BAMDP), can be solved with standard RL algorithms ~\citep{Duff2002}. In this framework an agent acting greedily in the BAMDP whilst updating the posterior acts optimally (according to its prior belief) in the original MDP. The Bayes-optimal policy for the unknown environment is the optimal policy of the BAMDP, thereby providing an elegant solution to the exploration-exploitation trade-off.

Unfortunately, the cardinality of the hyper-states grows exponentially with the planning horizon thereby rendering exact solution to the BAMDP computationally intractable for large problems ~\citep{Duff2002}. 

\section{Intrinsic Motivation}
The final directed exploration heuristic that we discuss is born out of the so-called \emph{intrinsic motivation} framework. There appears to be a growing scientific consensus in developmental psychology that human beings, from infants to adults, develop their understanding of the world using certain cognitive systems such as intuitive theories, social-structures, spatial systems, etc. ~\citep{Spelke2007,Lake2016}. During curiosity-driven, creative, or risk-taking activities, rational agents use this understanding to generate \textit{intrinsic goals}. Accomplishing these intrinsic goals leads to the accumulation of \textit{intrinsic rewards}, thereby exhibiting an innate desire to explore, manipulate, or probe their environment ~\citep{Oudeyer2007}.

Drawing parallels to reinforcement learning, the goal of a traditional RL agent is to maximize its expected cumulative reward. This behaviour is extrinsically motivated since the reward signal is external to an agent. We say that an agent is \textit{intrinsically motivated} if it has intrinsic goals and rewards. In the context of exploration for RL, the aim of the intrinsic motivation approach is to use intrinsic reward as a \textit{heuristic} that assigns an \textit{exploratory value} to the agent's actions. For example, an agent may receive intrinsic rewards for visiting novel parts of the environment that need further exploration ~\citep{Thrun1992a}.

Many formulations that quantify the exploratory value of an action has been put forth, and most of them augment the environment's reward function so as to motivate directed exploration. \cite{Schmidhuber2010} proposed a measure for intrinsic motivation by taking into account the improvement a learning algorithm effected on its predictive world model. This measure tracks the progress of an agent's ability to better compress the history of states and actions ~\citep{Steunebrink2013}. Another framework for intrinsically motivated learning is to maximize the \textit{mutual information}. An intrinsic reward measure called \textit{empowerment} is formulated by searching for the maximal mutual information ~\citep{Mohamed2015}. The notion of maximizing information gain was demonstrated in a humanoid robot by the introduction of \textit{artificial curiosity}~\citep{Schmidhuber1991} as an \textit{intrinsic goal} ~\citep{Frank2014}.

These formulations have some major drawbacks which hinder their suitability as exploration heuristics. Firstly, they fail to provide any strong theoretical guarantees of efficient exploration. \cite{Leike2016} pointed out that since none of these heuristics take into account the reward structure of the problem, they do not distinguish between regions of high and low expected reward. Secondly, these algorithms require that we maintain the environment dynamics of the underlying MDP, which prevents us from easily integrating them with model-free algorithms. Another major drawback is the computational overhead associated with calculating the heuristic. For problems with large state/action spaces, computing the intrinsic reward becomes  intractable for many heuristics ~\citep{Bellemare2016}. Most problems of interest have extremely large state spaces, and hence the intrinsic motivation heuristic is currently impractical as an exploration strategy in these domains.

\chapter{Exploration in Feature Space}
\label{cha:explore_phi}

\epigraph{\textit{`To wander is to be alive.'}}{Roman Payne, Europa}

In this chapter we introduce a simple, optimistic, count-based exploration strategy that achieves state-of-the-art results on high-dimensional RL benchmarks. In \cref{sec:explore_drawback} we begin by discussing the drawbacks of current exploration strategies. In \cref{sec:novelty_fa} we provide an exposition of the core ideas that underpin our algorithm. Finally, in \cref{sec:phi_eb_alg} we present our algorithm, as well as a number of related theoretical results.

\section{Drawbacks of Existing Exploration Methods for Large MDPs}
\label{sec:explore_drawback}
We introduced count-based exploration strategies for large MDPs in section \cref{sec:large_mdps}. Even though they are the current state-of-the-art exploration algorithms in these domains, we consider that there are some potential drawbacks to their common approach to estimating novelty. The motivation for our algorithm arises from trying to avoid these drawbacks.

\subsection{Choosing a Novelty Measure}
\label{sec:conf_novelty}
The aforementioned algorithms compute a generalized visit-count. This generalized count is a novelty measure that quantifies the (dis)similarity of a state to those in the history. These algorithms drive the agent towards regions of the state space with high novelty. However, the effectiveness of these novelty measures depends on the way in which they measure the similarity between states. If this similarity measure is not chosen in a principled way, states may deemed similar in ways that are not relevant to the given problem. Let us explore this issue by taking an example.
\begin{example}[Confounded novelty]
\label{eg:alice_exp}
     Alice is a foodie. She wants to explore the myriad restaurants that are open in her city. Suppose that Alice's novelty measure treats restaurants as similar if they are geographically close. Alice consults her novelty measure to choose a restaurant she has not tried yet, and it returns a Chinese restaurant in a distant suburb that she has not visited before. Alice scratches her head thinking: `I have been to a tonne of Chinese restaurants; if only my novelty measure understood that and suggested a different cuisine!' Unfortunately, her novelty measure considers this restaurant very dissimilar from the Chinese restaurants she has visited, simply because it is geographically distant from them.
\end{example}
The problem here is that Alice's novelty measure does not know anything about which features matter when evaluating the novelty of a restaurant. Let us now look at an example from the recent exploration literature where this problem can be clearly observed.

\subsubsection*{Inappropriate Novelty Measures in Practice}
The problems that can arise from an unprincipled choice of novelty measure are well illustrated in the experimental evaluation of~\cite{Stadie2015}. Their algorithm uses an autoencoder to encode the state-space into a lower dimensional representation. The encoding is then fed into a model dynamics prediction neural network which estimates the novelty by providing an error-based bonus. This method, called Model Prediction Exploration Bonuses (MP-EB), uses an error based estimator and is different from the visit-density model of \cite{Bellemare2016}, but they both estimate novelty. To generalize regarding value they use the DQN network, and so we refer to their algorithm as DQN+MP-EB.

\begin{figure}[H]
    \begin{minipage}{0.48\textwidth}
        \centering
        \includegraphics[scale=0.12]{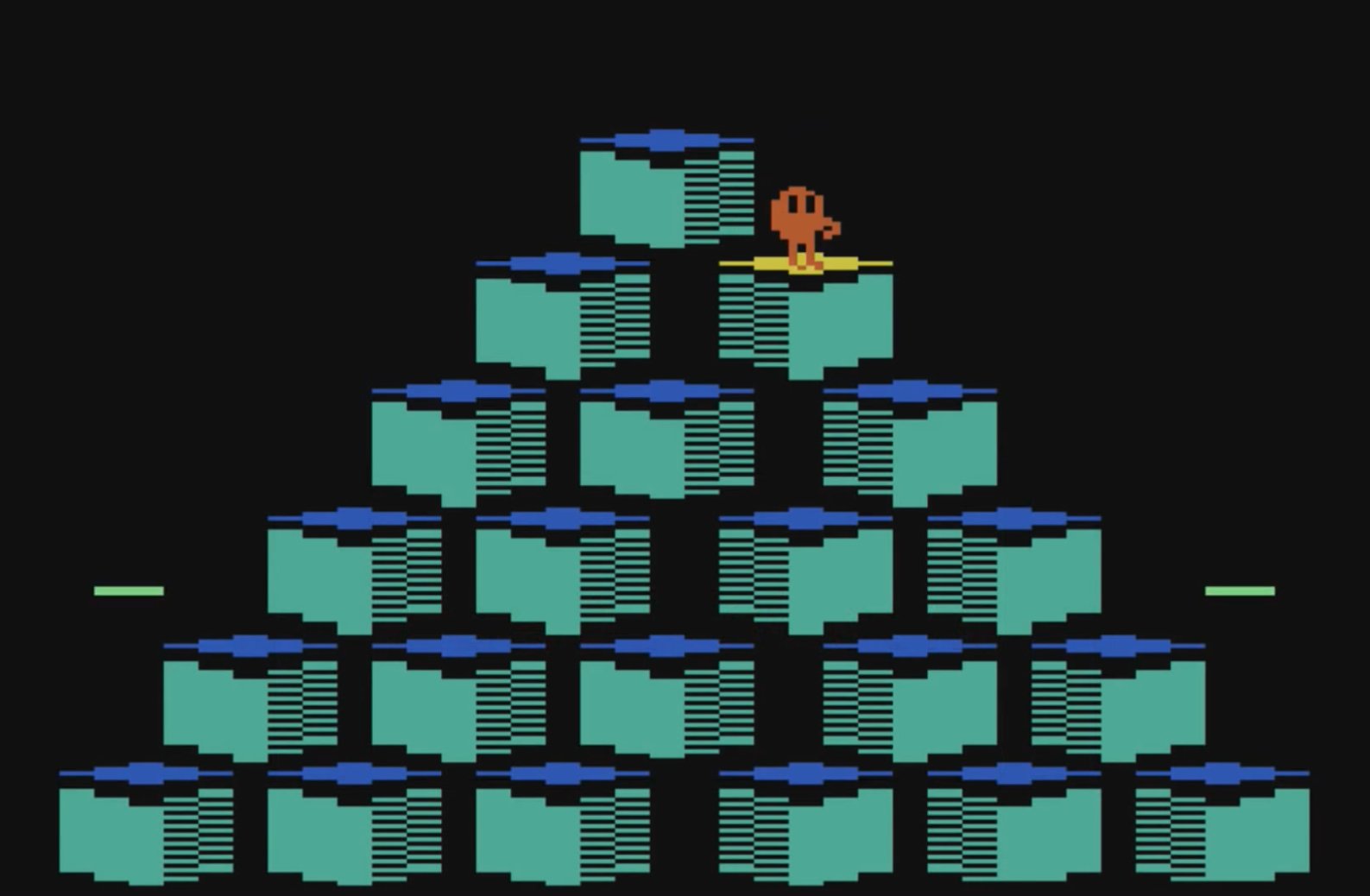}
        \caption{Q*bert Level 1}\label{fig:qbert_l1}
    \end{minipage}\hfill
    \begin {minipage}{0.48\textwidth}
        \centering
        \includegraphics[scale=0.12]{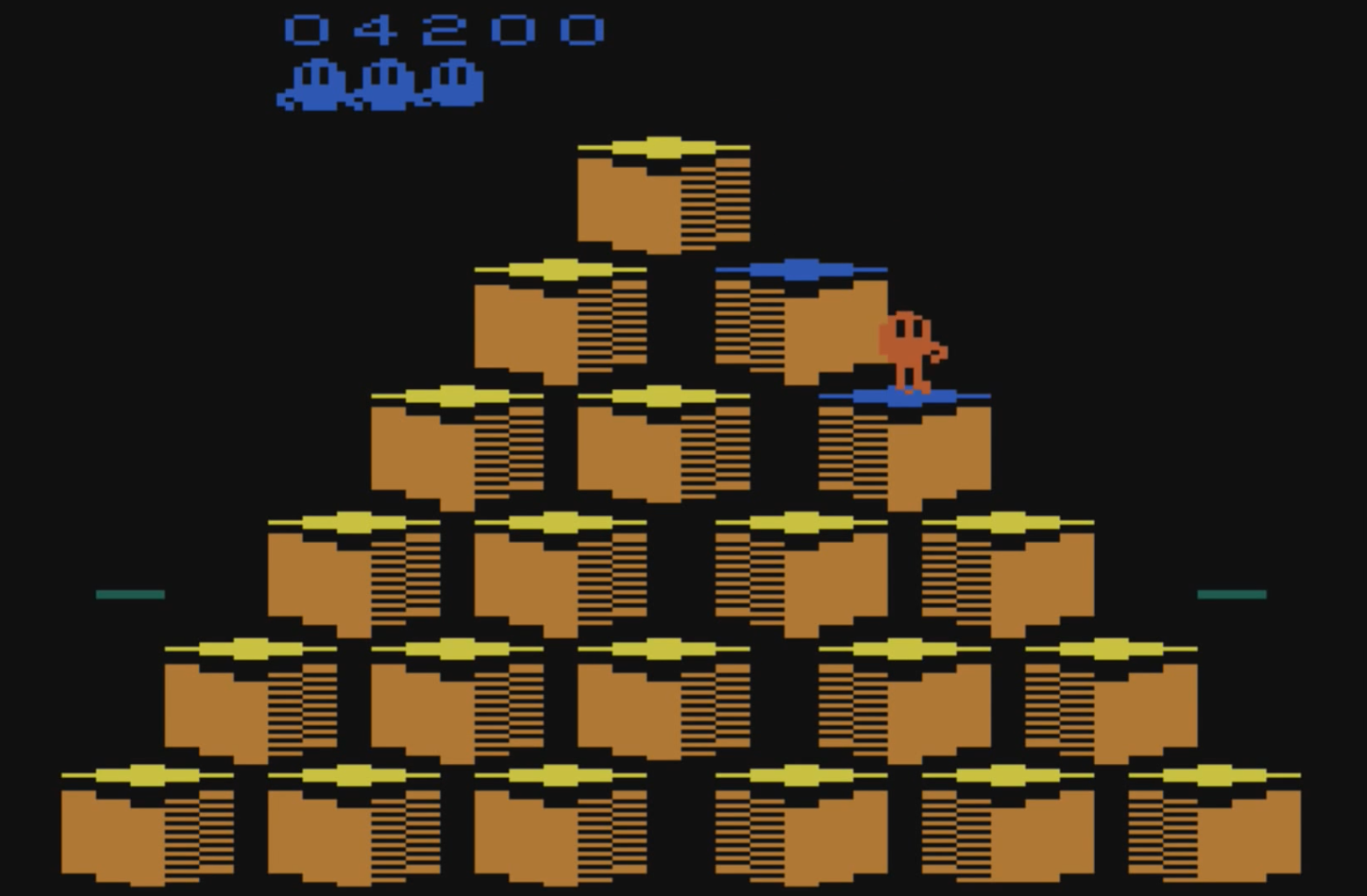}
        \caption{Q*bert Level 2}\label{fig:qbert_l2}
    \end{minipage}        
    \caption{Two levels of the Atari2600 game Q*bert}
    \label{fig:qbert_l12}
\end{figure}

During empirical evaluation of their algorithm an anomaly was detected in the game Q*bert\footnote{In Q*bert, the goal of the agent is to jump on all the cubes without falling off the edge, or being captured.} from the Arcade Learning Environment\footnote{ALE is a performance evaluation platform consisting of Atari2600 games. It is considered as the standard performance test bed for RL algorithms. We'll discuss in depth about ALE in \Cref{cha:methodology}.} (ALE) benchmarking suit. DQN+MP-EB algorithm scored lower than the baseline algorithm, DQN+$\epsilon$-greedy. They attributed this anomaly to the fact that during each level change of Q*bert, the color of the game changes dramatically, but neither the objective nor the structure of the level changes (\Cref{fig:qbert_l12}). When their agent reached level 2 (\Cref{fig:qbert_l2}), it perceived the state to be completely novel because MP-EB is sensitive to color. This tricked MP-EB into assigning high exploration bonus to all the states even though the action-values of the states hadn't changed. Hence the policy of the agent was impacted adversely.

The pathology of DQN+MP-EB in the Q*bert game highlights a serious problem with current novelty estimators \- they do not take into account the relevancy of a state to the task an agent is trying to accomplish. We argue that a measure of novelty should not just be an arbitrary generalized representation of how many times an agent has visited a state, but should ideally be a measure of dissimilarity in facets that are relevant to the agent's goal. Two states can be different in many ways; the challenge is to find out a similarity metric which is effective in achieving the agent's goal optimally. In Example \ref{eg:alice_exp}, Alice's novelty measure did not know that suggesting a restaurant with a different cuisine would be more relevant to her task, thereby naively suggesting a geographically distant unvisited restaurant.

\subsection{Separate Generalization Methods for Value and Uncertainty}
We contend that this deficiency is not peculiar to MP-EB, but rather that it may arise whenever the novelty measure is not designed to be task-relevant. Indeed, all of the aforementioned algorithms which compute a novelty measure share a common structure which leaves them vulnerable to this problem. Each algorithm has two quite unrelated components: a value estimator (an RL algorithm which performs policy evaluation), and a novelty estimator. Each component involves an entirely separate generalization method. The value estimator makes use of a feature representation of the state space in order to generalize about value. The novelty estimator separately utilizes a different, exploration-specific state space representation to measure the similarity between states. For example, the \#Exploration algorithm of \cite{Tang2016} uses the DQN algorithm for value estimation. In order to estimate novelty, however, \#Exploration maps the state space into a lower-dimensional representation using locality sensitive hashing. The similarity measure induced by the choice of hash codes is unlikely to resemble that which is induced by the features learnt by DQN. The DQN-CTS-EB algorithm of \cite{Bellemare2016} has a similar structure: DQN is used to estimate value, but the CTS density model is used to estimate novelty. Again, it is not obvious that there should be much in common between the two similarity measures induced by these different state space representations. One might think that this is natural; after all, each representation is used for a different purpose. However, there are two questions we can ask here. Firstly, is there redundant computation due to performing a dimensionality reduction of the same state-space twice? If so, can we reuse the same state space representation for both value and novelty estimation? We address these questions in the following section.

Before moving on we should note that the concerns we express in this section have already been raised in the literature. In their empirical evaluation \cite{Bellemare2016} observed that their value estimator (DQN) was learning at a much slower rate than their CTS density model (their novelty measure). The authors attribute this mismatch to the incompatibility between novelty and value estimators. They further go on to suggest that designing density models to be compatible with value function would be beneficial and a promising research direction.

The drawbacks we presented in this section suggest that there may be much room for improvement in the design of novelty estimators for exploration. In the following sections we describe our technique for estimating novelty by factoring in the insights we gained from analyzing these drawbacks. We first provide a solid footing for some of the assumptions that we made while designing the algorithm. We then go on to present our core exploration algorithm, and then combine it with a model-free RL algorithm (SARSA($\lambda$)). In the coming chapters we present empirical evidence that our RL algorithm achieves world-leading results on the ALE benchmarking suite.

\section{Estimating Novelty in Feature Space}
\label{sec:novelty_fa}

\subsection{Motivation}
Which representation of the state space is appropriate for novelty estimation? Intuitively, if we use some \textit{parameters} to determine the value of a \textit{state}, then naturally, two such objects are considered dissimilar only if they differ in these parameters. Analogously, if the agent is using certain features to determine the value of a state, then naturally, two such states should be considered dissimilar only if they differ in those value-relevant features. This motivates us to construct a similarity measure that exploits the feature representation that is used for value function approximation. These features are explicitly designed to be relevant for estimating value. If they were not, they would not permit a good approximation to the true value function. This sets our method apart from the approaches described in \cref{sec:large_mdps}, which measure novelty with respect to a separate, exploration-specific representation of the state space, one that bears no relation to the value function or the reward structure of the MDP. We argue that measuring novelty in feature space is a simpler and more principled approach, and hypothesise that more efficient exploration will result. Our proposal ensures that generalization regarding novelty is done in the same space as generalization regarding value. \cref{fig:blk_explore} illustrates the basic structure of our proposed novelty estimator.

\begin{figure}[H]
  \label{fig:blk_explore}
  \centering
  \includegraphics[scale=0.55]{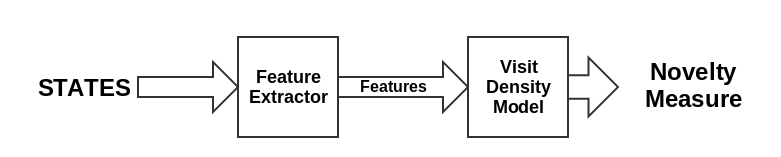}
  \caption{Novelty Measure in Feature Space}
\end{figure}

Let us make the idea more concrete with our running example.

\begin{example}[Value-relevant exploration]
After Alice's disappointing restaurant visit last time, she tweaked her novelty estimator such that it now generalizes based on value-relevant features like the type of cuisine, the star rating, and the other features that truly determine the quality of Alice's dining experience. When Alice is ready to try something new, she can rest assured that it's going to be something novel in a way that is meaningful.
\end{example}

\subsection{Design Decisions}
\label{sec:design_decisions}
Our exploration strategy, henceforth known as $\bm\phi$\textit{-exploration bonus} ($\bm\phi$-EB), can be thought of as exploration in the feature space. This makes the existence of a feature map crucial to our strategy. Therefore we require that our algorithm be compatible with Linear Function Approximation (LFA). Before the advent of neural networks and subsequently DQN, large RL problems used linear function approximation to estimate the value of a state. Our decision to use LFA as our value prediction module has the following desirable benefits:

\begin{itemize}        
    \item \textbf{Domain Independence: } The visit-density models that we have seen so far (MP-EB, CTS-EB, PixelCNN, etc.) are designed to work with RGB pixel values from a video input. Though there are many domains that use video input to train the agent, there are equally many other domains that have nothing to do with a video input. For example, reinforcement learning is used in the financial sector to optimize portfolios, asset allocations, and trading systems ~\citep{Moody2001}. Therefore developing a visit-density model that is domain independent is a key challenge. Our $\bm\phi$-EB method estimates the novelty using the same features that LFA uses to approximate the value function. This allows our exploration strategy to be compatible with any value-based RL algorithm that uses LFA.
    
    \item \textbf{Indirect dependence on LFA: } LFA is essentially a linear combination of features. The only requirement $\bm\phi$-EB has is the existence of a feature map, which is implicitly satisfied with LFA. Because of this indirect dependence on LFA, we hypothesis that it is possible to extend $\bm\phi$-EB to be compatible with value-networks that perform representation learning as well (\eg DQN). Due to resource and time constraints we do not pursue empirical evidence for this claim, but rather leave this as a possible future extension of our research.
    
    \item \textbf{Single point of change: } The best way to assess the performance impact of changes to a system is to confine the change to a single module and then run performance tests. Following this principle, we know that SARSA($\lambda$) is a value-based RL algorithm which uses LFA for value prediction and $\epsilon$-greedy for exploration~\citep{Sutton1998}. SARSA($\lambda$) has been studied, perfected and validated through-out the ages. Therefore showcasing the performance gains achieved by replacing $\epsilon$-greedy with our $\bm\phi$-EB exploration strategy allows for a sound empirical proof for the efficacy of our algorithm.
    
\end{itemize}
One drawback of using LFA for value prediction is that it requires a set of hand-crafted features. This is easily mitigated by choosing the Arcade Learning Environment (ALE) as our evaluation platform~\citep{Bellemare2013}, combined with the Blob-PROST feature set~\citep{Liang2015}. Using Blob-PROST as our feature set has and added advantage. Blob-PROST is designed to mimic the features learned by DQN, thus making our algorithm comparable with those using DQN for representation learning and value prediction. We'll discuss in depth about the ALE and Blob-PROST in~\cref{cha:methodology}.

\section{The \texorpdfstring{$\bm\phi$}{phi}-EB Algorithm}
\label{sec:phi_eb_alg}
The main original contribution of this work is a method for estimating novelty in feature space. The challenge is to do so without explicitly computing the distance between each new feature vector and all the feature vectors in the history. That approach quickly becomes infeasible because the cost of computing all these distances grows with the size of the history. Our method instead constructs a density model over feature space that assigns higher probability to states that share more features with more frequently observed states. In order to formally describe our method we first introduce some notation.
\subsubsection*{Notation}
\begin{itemize}
    \item $\bm\phi: \mathcal{S} \rightarrow \mathcal{T}\subseteq\mathbb{R}^M$, The feature map used in LFA. Maps the state space into an $M$-dimensional feature space, $\mathcal{T}$.
    \item $\bm\phi_t \equiv \bm\phi(s_t)$,
    Feature vector observed at time $t$, whose $i^{th}$ component is denoted by $\phi_{t,i}$
    \item $\bm\phi_{1:t}\equiv(\bm\phi_1,\ldots,\bm\phi_t)\in\mathcal{T}^t$, Sequence of feature vectors observed after $t$ timesteps.
    \item $\bm\phi_{1:t}\bm\phi\equiv(\bm\phi_1,\ldots,\bm\phi_t, \bm\phi)\in\mathcal{T}^{t+1}$, Sequence where $\bm\phi_{1:t}$ is followed by $\bm\phi$.
    \item $\mathcal{T}^*$, Set of all finite sequences of feature vectors.
    \item $\rho:\mathcal{T}^*\times\mathcal{T}\rightarrow[0,1]$, The sequential density model (SDM) that maps a finite sequence of feature vectors to a probability distribution.
\end{itemize}

We will now present the key component of our algorithm that allows us to estimate novelty in feature space.
\subsection{Feature Visit-Density}
\label{sec:feat_visit_density}
\begin{definition}[Feature visit-density]
The feature visit-density $\rho_t(\bm\phi)\equiv\rho(\bm\phi\,;\,\bm\phi_{1:t})$ at time $t$ is a probability distribution over the feature space $\mathcal{T}$, representing the probability of observing the feature vector $\bm\phi$ after observing the sequence $\bm\phi_{1:t}$. It is modelled as a product of independent factor distributions $\rho_t^i(\phi_i)$ over individual features $\phi_i$
\[\rho_t(\bm\phi) = \prod_{i=1}^M\rho_t^i(\phi_i)\]
\end{definition}

This density model induces a similarity measure on the feature space. Loosely speaking, feature vectors that share component features are deemed similar. This enables us to use $\rho_{t}(\bm{\phi})$ as a novelty measure for states, because it represents the frequency with which features are observed in the history. When confronted with a new state, we are able to estimate how frequently its component features have occurred in the history. If $\bm{\phi}(s)$ has more novel component features, $\rho_{t}(\bm{\phi})$ will be lower. By using a density model we are therefore able to measure novelty in a way that usefully generalizes the agent's uncertainty across the state space. To illustrate this, let us consider an example.

\begin{example}
Suppose we use a 3-D binary feature map and that after 3 timesteps the history of observed feature vectors is $\bm{\phi}_{1:3}=(0,1,0),(0,1,0),(0,1,0)$. Let us estimate the feature visit densities of two unobserved feature vectors $\bm{\phi'}=(1,1,0)$, and $\bm{\phi}''=(1,0,1)$. Using the KT estimator for the factor models, we have $\rho_{3}(\bm{\phi}')=\rho_{3}^{1}(1)\cdot\rho_{3}^{2}(1)\cdot\rho_{3}^{3}(0)=\frac{1}{8}\cdot\frac{7}{8}\cdot\frac{7}{8}\approx0.1$, and $\rho_{3}(\bm{\phi}'')=\rho_{3}^{1}(1)\cdot\rho_{3}^{2}(0)\cdot\rho_{3}^{3}(1)=(\frac{1}{8})^{3}\approx0.002$. Note that $\rho_{3}(\bm{\phi}')>\rho_{3}(\bm{\phi}'')$ because the component features of $\bm{\phi}'$ are more similar to those in the history. As desired, our novelty measure generalizes across the state space.
\end{example}

Each factor distribution $\rho_t^i(\phi_i)$ is modelled using a count-based estimator. A naive option would be to use the empirical estimator which is the ratio of the number of times a feature has occurred to the total number of time steps. Another class of count-based estimators are the Dirichlet estimators which enjoy strong theoretical guarantees~\citep{Hutter2013}. We use the Krichevsky-Trofimov(KT) estimator which is a Dirichlet-like estimator that is simple, easy to implement, scalable, and data efficient \citep{Krichevsky1981}. If $N_t(\phi_i)$ is the number of times the feature $\phi_i$ has been observed, then the KT estimator is given by:
\[\rho_t^i(\phi_i) = \frac{N_t(\phi_i)+\frac{1}{2}}{t+1}\]

Using independent factor distributions for modelling the probability of each feature component inherently assumes that the features are independently distributed. This is not always the case, especially in video-input based domains such as the ALE we have many features that are strongly correlated. This doesn't mean that we cannot use fully factorized distributions. One of the early assumptions made by ~\cite{Bellemare2016} about the density model is that the states are independently distributed. This allowed them to factorize the states, and model each factor using a position-dependent CTS\footnote{A Bayesian variable-order Markov model.} density model. Moreover, our empirical evaluations show that we achieve world leading results in hard exploration games suggesting that independent factored distributions produce good novelty measures. Thus by precedence and by empirical data the independence assumption on the features is a well-justified trade-off that makes the computation of novelty fast and data efficient.

\subsection{\texorpdfstring{The $\bm\phi$}{phi}-pseudocount}
Here we adopt a recently proposed method for computing generalised visit-counts from density models \citep{Bellemare2016}. By analogy with the pseudocounts presented in that work, we derive two $\phi$-pseudocounts from our feature visit-density. Both variants presented generalize the same quantity, the state visitation count function $N_t(s)$. The expression given in the following definition is derived in \cite{Bellemare2016}. We emphasize that our approach constitutes a departure from theirs, because while they derive pseudocounts from a \textit{state} visit-density model, we do so using a \textit{feature} visit-density model.
\begin{definition}[$\bm\phi$-pseudocount]
Let $\rho_t'(\bm\phi)\equiv\rho_t(\bm\phi\,;\,\bm\phi_{1:t}\bm\phi)$\footnote{Also called the \textit{recoding probability}.} be the probability that the feature visit-density model would assign $\bm\phi$ if it was observed one more time. Then the $\bm\phi$-pseudocount for a state $s\in\mathcal{S}$ is given by:
\[\hat{N}_t^\phi(s) = \frac{\rho_t(\bm\phi(s))(1-\rho'_t(\bm\phi_t(s)))}{\rho_t'(\bm\phi(s))-\rho_t(\bm\phi(s))}\]
\end{definition}

\subsection{The \texorpdfstring{$\bm\phi$}{phi}-Exploration Bonus algorithm \texorpdfstring{$(\bm\phi$}{(phi}-EB\texorpdfstring{$)$}{)}}

Equipped with all the tools necessary for the construction of an exploration bonus we now proceed to define the $\bm\phi$-EB algorithm. We provide a high level flow-chart for the construction of the bonus in \cref{fig:phi_eb_flow}, and the corresponding pseudo-code in Algorithm \ref{alg:phi_eb_calc}. Having defined the $\phi$-pseudocount (a generalised visit-count), we follow traditional count-based exploration algorithms by computing an exploration bonus that depends on this count. The functional form of the bonus is the same as in MBIE-EB\@; we merely replace the empirical state-visit count with our $\phi$-pseudocount.

\begin{definition}[$\bm\phi$-exploration bonus]
\label{def:phi_exp_bonus}
The exploration bonus for a state-action pair $(s,a)\in \mathcal{S}\times\mathcal{A}$ at time $t$ is
\[\mathcal{R}_t^\phi(s,a) = \frac{\beta}{\sqrt{\hat{N}_t^\phi(s)}}\]
where $\beta$ is a hyper-parameter that controls the agents level of optimism.
\end{definition}

Loosely speaking, the hyper-parameter $\beta$ can be viewed as a knob that tunes the agent's confidence in its estimate of the true action-value function. Higher values of $\beta$ makes the agent under-confident about value, and result in too much exploration. Very low $\beta$ values do not encourage enough exploration because the exploration bonus is too small to dissuade the agent from acting greedily with respect to its current value estimates. In both scenarios the final policy of the agent is affected adversely. The goal is to find a $\beta$ value that gives good results across domains. We performed a coarse parameter sweep among the games in the ALE evaluation platform and concluded that $\beta=0.05$ was the best value. Further details regarding the selection of $\beta$ value is discussed in \cref{sec:beta_sweep}.

\begin{figure}[H]
  \label{fig:phi_eb_flow}
  \centering
  \includegraphics[scale=0.51]{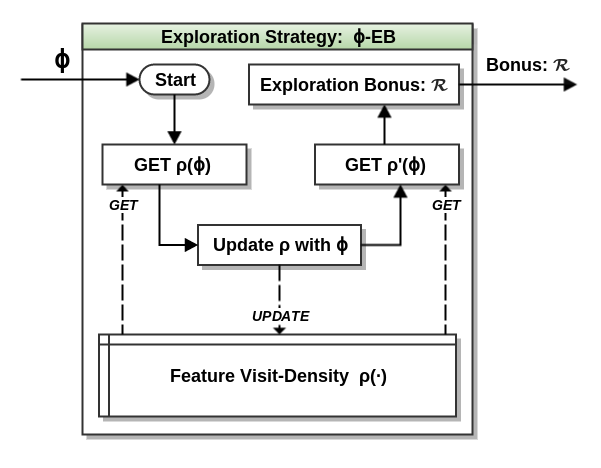}
  \caption{Flow Chart for computing the exploration bonus of $\bm\phi$-EB}
\end{figure}

\begin{algorithm}[H]
\caption{ $\bm\phi$-exploration bonus}
\label{alg:phi_eb_calc}
\SetAlgoLined
\DontPrintSemicolon
\KwIn{Density Model $\rho$}
\SetKwProg{Fn}{function}{}{end function}
\Fn{\textsc{FeatureVisitDensity}{$(\bm\phi)$}}{
\Return $\prod\limits_{i=1}^M\rho^i(\phi_i)$\;
}
\BlankLine
\BlankLine
\KwIn{Feature Visit Count $N_t$; Density Model $\rho$; Current Timestep $t$}
\setcounter{AlgoLine}{0}
\Fn{\textsc{UpdateFeatureVisitDensity}{$(\bm\phi)$}}{
    \For{i=1 \emph{\KwTo} $M$}{
        $\rho^i(\phi_i) \leftarrow \dfrac{N_t(\phi_i)+\frac{1}{2}}{t+1}$\;
    }
}
\BlankLine
\BlankLine
\setcounter{AlgoLine}{0}
\SetKwProg{Fn}{function}{}{end function}
\Fn{\textsc{PseudoCount}{$(p,p')$}}{
\Return $\dfrac{p(1-p)}{p'-p}$\;
}
\BlankLine
\BlankLine
\setcounter{AlgoLine}{0}
\KwIn{LFA Feature Map $\bm\phi$; Exploration Coefficient $\beta$}
\SetKwProg{Fn}{function}{}{end function}
\Fn{\textsc{ExplorationBonus}{$(s)$}}{
$\rho(\bm\phi)$ $\leftarrow$ \textsc{FeatureVisitDensity}{$(\bm\phi(s))$}\;
\textsc{UpdateFeatureVisitDensity}{$(\bm\phi(s))$}\;
$\rho'(\bm\phi)$ $\leftarrow$ \textsc{FeatureVisitDensity}{$(\bm\phi(s))$}\;
$\hat{N}^\phi(s) \leftarrow $ \textsc{PseudoCount}{$(\rho(\bm\phi), \rho'(\bm\phi))$}\;
\Return $\dfrac{\beta}{\sqrt{\hat{N}^\phi(s)}}$\;
}
\end{algorithm}

\subsection{LFA with \texorpdfstring{$\bm\phi$}{phi}-EB}
One the advantages that we have in developing our algorithm for use with LFA is that our exploration strategy is compatible with all value-based RL algorithms that use LFA. As we will see, our empirical performance across a range of environments suggests that one can plug our exploration strategy with little to no modification into any of these algorithms and expect considerable gains in exploration efficiency. In our empirical evaluation we use SARSA($\lambda$) with replacing traces as our value-based reinforcement learning algorithm. Algorithm \ref{alg:rl_with_eb} presents the pseudo-code for a generic RL algorithm that uses the augmented reward $r^+$ for updating the function parameters $\bm\theta$ of the approximate action-value function $\hat Q(s, a) = \bm{\theta}^T \bm{\phi}(s, a)$.

\begin{algorithm}[H]
\caption{LFA with $\bm\phi$-EB}
\label{alg:rl_with_eb}
\SetAlgoLined
\DontPrintSemicolon
\KwIn{LFA Feature Map $\bm\phi$; Training Horizon $t_{end}$}
$t\leftarrow0$\;
Initialize arbitrary $\bm\theta_t$\;
$s_t,a_t\leftarrow$ initial state, action\;
\While{$t<t_{end}$}{
$r_{t+1},s_{t+1} \leftarrow$ \textsc{Act}{$(a_t)$}\;
$\mathcal{R}_t^\phi(s_t,a_t)\leftarrow$ \textsc{ExplorationBonus}{$(s_t)$}\;
$r_{t+1}^+ \leftarrow r_{t+1} + \mathcal{R}_t^\phi(s_t,a_t)$\;
$a_{t+1} \leftarrow$ \textsc{NextAction}{$(s_{t+1}, \bm\theta_{t})$}\;
$\bm\theta_{t+1}\leftarrow$\textsc{UpdateTheta}{$(r_{t+1}^+, \bm\phi)$}\;
$t\leftarrow t+1$\;
}
\Return $\bm\theta_{t_{end}}$\;
\footnotetext{The functions \textsc{NextAction} and \textsc{UpdateTheta} are specific to the underlying value-based RL algorithm used, hence left unspecified. \textsc{Act}{$(a_t)$} performs action $a_t$ in the environment.}
\end{algorithm}

\subsection{Complexity Analysis}
\subsubsection*{Time Complexity}
From Algorithm~\ref{alg:phi_eb_calc} it is trivial to see that a call to \textsc{ExplorationBonus} has a worst-case time complexity of $O(M)$, where $M$ is the dimension of the feature space. This suggests that the time needed to compute the novelty of a state is independent of the dimension of the state-space. Also, more often than not, the dimension of the feature space is far smaller than that of the state space. Therefore, our algorithms generates significant savings in computation over other density models whose time-complexity scales with the number states. In practice, for a binary feature set like Blob-PROST we process only those features that have been observed before. This is achieved by maintaining a single prototypical factor density estimator for all previously unseen features. We'll discuss the implementation specific details in depth in \cref{cha:methodology}.

\subsubsection*{Space Complexity}
We look at Algorithm~\ref{alg:rl_with_eb} to analyze what objects are needed to be persisted across iterations so as to facilitate calculation of the exploration bonus. Clearly the factor density estimators $\rho^i(\phi_i)$, and the feature visit count $N_t(\phi_i)$ are needed to evaluate and update the feature visit density. Therefore it can be seen that our algorithm has a worst case space complexity of $O(M)$. Again, because the features in Blob-PROST are binary valued, the KT estimator can be defined recursively. This allows for updating the factor density online without the need to maintain a feature visit count $N_t(\phi_i)$. We'll discuss more on this in \cref{cha:methodology}.

\section{Summary}
In this chapter we have presented the main contribution of our research. Motivated by the drawbacks of current state-of-the-art exploration algorithms, we introduced our novel exploration algorithm called $\bm\phi$-EB. Later, we provided an exposition on the various components of the algorithm and also analysed its time and space complexity.

Now that we have presented our algorithm, we move on to implementation aspects. The next chapter provides a detailed overview of the evaluation test-bed, the software architecture, and the implementation challenges faced during the Research \& Development of the algorithm.

\chapter{Implementation}
\label{cha:methodology}

\epigraph{\textit{'Any A.I. smart enough to pass a Turing test is smart enough to know to fail it.'}}{Ian McDonald, \textit{River of Gods}}

This chapter is dedicated to developing a technically correct implementation of our exploration strategy $\bm\phi$-EB, and its surrounding infrastructure. This allows us to perform a sound empirical evaluation which is the focus of the next chapter.

In \cref{sec:sw_arch} we present the high-level architecture of the whole system, and how the various components interact with each other. Later, in \cref{sec:implementation}, we present the implementation of our exploration strategy, $\bm\phi$-EB. Throughout the section we also talk about the design aspects, and optimization's that went into implementing $\bm\phi$-EB.

\section{Software Architecture}
\label{sec:sw_arch}

Our implementation goal is to develop an RL software agent that uses $\bm\phi$-EB as its exploration strategy. We present the high-level design of the algorithm in \cref{fig:soft_arch}. The presented diagram is analogous to the Agent-Environment interaction cycle (\cref{fig:agent_env}), but with more granularity. From an exploration-centric standpoint, we first provide a concise overview of the components presented in the architecture, and then an exposition on the implementation details for $\bm\phi$-EB.

\begin{figure}[H]
  \label{fig:soft_arch}
  \centering
  \includegraphics[scale=0.52]{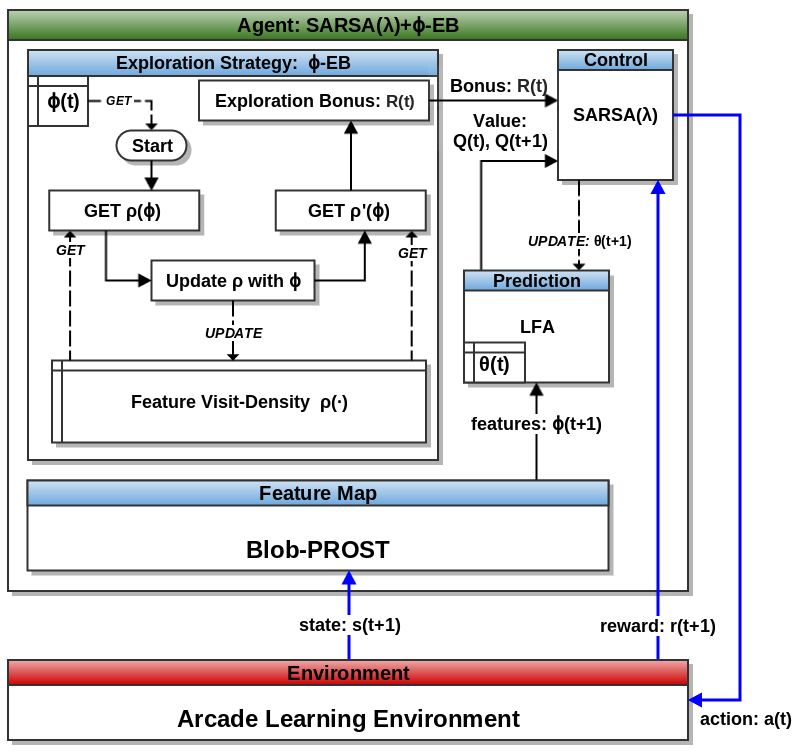}
  \caption[Agent-Environment interaction framework for SARSA$(\lambda)$+$\bm\phi$-EB]{Agent-Environment interaction framework for SARSA$(\lambda)$+$\bm\phi$-EB.\footnotemark}  
\end{figure}
\footnotetext{Boxes with a tiny row and column, on top and left edges respectively, denote objects stored in RAM. They can persists across cycles and episodes. Dotted arrows with instruction on them denote operation on such objects.}

\subsection{Modular Overview}

\subsubsection*{Control}
We use SARSA$(\lambda)$ with replacing traces \citep{Sutton1998} as our learning algorithm. This decision was driven primarily by two factors. First, using the Blob-PROST\footnote{Discussed in \cref{sec:feature_set}.} feature set meant that we are locked into the framework provided by \cite{Liang2015}. In our case this is in fact desirable. Replacing the exploration module of an open source, peer-reviewed and published implementation with our own exploration module enhances the credibility of any performance gains that result. Second, we need a learning algorithm that works well with Linear Function Approximation (LFA).  When coupled with LFA, SARSA$(\lambda)$ has better convergence guarantees than Q-learning~\citep{Melo2008}. Hence, SARSA$(\lambda)$ is a suitable value estimation algorithm for our agent.

\subsubsection*{Exploration Strategy}
This component is our $\bm\phi$-EB exploration strategy that was proposed in \cref{cha:explore_phi}. We implement it using the \texttt{C++11}\footnote{\texttt{C++11} is a major revision of \texttt{C++}. This particular version was chosen because it makes several useful additions to core language libraries.} programming language. \texttt{C++} offers a significant edge over other languages in terms of efficiency and greater control over memory management. Due to the high dimensional nature of our problem, we need to extract as much performance as possible from our code. Therefore implementing the exploration strategy in \texttt{C++} is critical to the empirical success of our algorithm. Moreover, a lock-in with the framework provided by \cite{Liang2015} meant that there was no compelling reason to choose a different programming language. In the coming sections we provide a detailed look at the design and implementation of $\bm\phi$-EB.

\subsubsection*{Prediction}
We use LFA to generalize the action-value function for unknown state-action pairs. For further discussion on LFA we refer the reader to \cref{sec:fa}. LFA uses the Blob-PROST feature set from \cite{Liang2015} to approximate the action-value function.

\subsubsection*{Feature Map}
We consider the feature map to be an integral part of the agent. The ability to discern different features of a state is imperative to generalization regarding value, and by extension, to exploration. Our agent explores in the feature space, and we want to use LFA for value prediction. This necessitated the need for an efficient and effective feature set. The Blob-PROST feature set from \cite{Liang2015} is the best feature set available to date for the Arcade Learning Environment (ALE) evaluation platform. More details on Blob-PROST in \cref{sec:feature_set}.

\subsubsection*{Environment}
We chose the Arcade Learning Environment as our evaluation platform for the following reasons.
\begin{itemize}
    \item ALE contains many games which vary in degree of exploration hardness. This allows us to test the efficacy of our algorithm on a broad spectrum of games \citep{Bellemare2016}.
    \item ALE is widely accepted as the standard for testing RL algorithms. The vast majority of exploration specific research that is published post \cite{Bellemare2013} has adopted the ALE platform to report empirical results \citep{Mnih2013,Mnih2015,Stadie2015,Osband2016a,Bellemare2016,Tang2016,Ostrovski2017}. Therefore, in order to compare and contrast our results with existing research, it is crucial that we choose ALE as our evaluation platform.
\end{itemize}
More details on the Arcade Learning Environment in \cref{sec:ale}.


\subsection{Agent-Environment Work-flow}

We want to seamlessly integrate $\bm\phi$-EB into the control module. Therefore understanding the nuances of what happens in an agents cycle from an implementation perspective is critical. \cref{fig:soft_arch} also doubles as a work-flow diagram for our agent \agent. Following the usual agent-environment interaction process at timestep $t$, the agent performs an action $a_t$ on the environment and receives an extrinsic reward $r_{t+1}$. The agent also observes the new state of the environment, $s_{t+1}$. Inside the agent, the Blob-PROST feature map consumes the current state $s_{t+1}$ and returns a feature vector $\bm\phi_{t+1}$. The feature vector is then used by the LFA module to do value prediction. The $\bm\phi$-EB module uses the stored feature $\bm\phi_t$ to generate the exploration bonus $\mathcal{R}_t(s_t, a_t)$\footnote{Here we can see that generalization regarding value and novelty are being done in the same space.}. SARSA$(\lambda)$ updates the parameters of LFA with the TD update and chooses the next action optimistically.

\subsubsection*{$\bm\phi$-EB}
In the exploration strategy module the feature visit-density $\rho$ is a hash map that persists in memory across cycles and episodes. Each entry of $\rho$ is a key-value pair mapping individual features $\phi_i$ to its corresponding factor distribution $\rho_i$. In the $\bm\phi$-EB module shown in \cref{fig:soft_arch}, the flow of control is as follows: compute $\rho(\bm\phi)$ as product of factors, update $\rho$ with the observation $\bm\phi$, then compute $\rho(\bm\phi)$ again. Now calculate the pseudo-count and subsequently the exploration bonus $\mathcal{R}_t(s_t)$. The bonus $\mathcal{R}_t(s_t)$ is considered as an intrinsic reward and is sent to the control module, SARSA$(\lambda)$.

\subsubsection*{LFA}
The prediction module (LFA) approximates the next state action-value function using the parameter vector $\bm\theta_t$ as $\hat Q^\pi(s_{t+1}, a_{t+1}) = \bm{\theta_t}^T \bm{\phi}(s_{t+1}, a_{t+1})$. LFA sends the next state $Q$-value to the control module, SARSA$(\lambda)$. The parameter vector $\bm\theta_t$ is also an object that is saved in memory and persisted across cycles and episodes.

\subsubsection*{SARSA$(\lambda)$\footnote{For brevity we have left out the discussion on eligibility traces.}}
All the results from the various modules flow into the control module SARSA$(\lambda)$. The control module essentially has two tasks.
\begin{itemize}
    \item \textbf{Choose the next action $a_{t+1}$}\\
    The next state is chosen by being greedy with respect to next state action-value obtained from LFA.
    \[a_{t+1} = \arg\max_{a\in\mathcal{A}}\Big[\hat Q^\pi(s_{t+1},a)\Big]\]
    \item \textbf{Update $\bm\theta$ of LFA}\\
    First we augment the extrinsic reward $r_{t+1}$ with the intrinsic reward $\mathcal{R}_t(s_t)$ obtained from $\bm\phi$-EB module.
    \[r_{t+1}^+ = r_{t+1} + \mathcal{R}_t(s_t)\]
    The augmented reward $r_{t+1}^+$, the next state action-value $\hat Q^\pi(s_{t+1}, a_{t+1})$, and the current state action-value $\hat Q^\pi(s_{t}, a_{t})$, both from LFA, is used to calculate the TD error.
    \[\delta_{t+1} = r_{t+1}^+ + \gamma \hat Q^\pi(s_{t+1}, a_{t+1}) - \hat Q^\pi(s_t, a_t)\]
    Where $\gamma$ is the discount factor. Next we update $\bm\theta$, and is updated using the usual TD update formula.
    \[\bm\theta_{t+1} \leftarrow \bm\theta_t + \alpha\delta_{t+1}\]
    Where $\alpha$ is the learning rate.
\end{itemize}

Now that we have a clear idea about the surrounding infrastructure, let's move on to the implementation details of $\bm\phi$-EB

\section{Implementation Details}
\label{sec:implementation}

\subsection{Feature Visit-Density}
\label{sec:fvd_impl_det}
The central data structure that stores the factor distribution of each individual feature is an \texttt{unordered\_map}\footnote{Essentially a hash map } called \texttt{fvd\_map}$\langle\phi_i, \rho^i\rangle$. Each entry is a key-value pair mapping individual features to its corresponding factor distribution. This allows us to have constant time look-up for the factor distribution of any feature.
At first glance of the theoretical formulation of feature visit-density (\Cref{sec:feat_visit_density}, Algorithm \ref{alg:phi_eb_calc}), the implementation looks straight forward. Unfortunately that is not the case. We need to take into account certain implementation specific aspects that are often subsumed by mathematical formulation. Following are some of the important implementation details that need to be considered for computing feature visit-density. 
\begin{itemize}
    \item \textbf{Sparse Feature Vector}\\
    In practice the feature vector $\bm\phi$ is the list of features that are active in the current timestep. Most of the time the set of observed features is in a vastly smaller subspace of the feature space $\mathbb{R}^M$. Therefore, iterating till $M$ to compute the product of the factor distributions is quite wasteful. In order to overcome this we maintain a \textit{prototype}\footnote{In this context, a prototype function creates an object of a specified type. Here, a KT estimator which has seen $t$ zeros.} function that computes the KT-estimate of observing the feature give that it has never been observed in $t$ timesteps. Now whenever a new feature is observed it is added to \texttt{fvd\_map} with the current value of the prototype. If $M_t$ is the total number of features observed till timestep $t$, then we can compute the feature visit-density in $O(M_t)$ time.
    
    \item \textbf{Numerical Stability}\\
    Experience has taught us that when dealing with probabilities, innocent looking formulas such as ours can be deceiving. Since we are taking product of probabilities, they are bound to numerically underflow. In our implementation, rather than computing $\prod\limits_{i=1}^M\rho^i(\phi_i)$ we compute $\sum\limits_{i=1}^M\log\big(\rho^i(\phi_i)\big)$. This allows us to safely perform probability calculations without the worry of underflow.
    
    \item \textbf{Inactive Features}\\
    During the evaluation of the feature visit-density we need to consider the factor distributions for the features that are inactive but previously observed. Since we have already observed $\bm\phi_t$, we can identify the features in \texttt{fvd\_map} that are not active. The probability density stored in \texttt{fvd\_map} against some feature $\phi_i$, is the probability of $\phi_i$ being active. Assuming $\phi_i\not\in\bm\phi_t$, the probability of $\phi_i$ not being active is given by $\Big(1-$\texttt{fvd\_map$[\phi_i]\Big)$}. Therefore when evaluating feature visit-density for $\bm\phi_t$ we should also factor in the probability of inactive features not occurring.
\end{itemize}

Algorithm \ref{alg:impl_visit_dens} presents the implementation for computing the feature visit-density with all the above mentioned optimization/requirements. One key observation is that we return the log-probability. This is done to facilitate further log based probability computation that occur in other modules.

\begin{algorithm}[H]
\caption{Implementation of Feature Visit Density}
\label{alg:impl_visit_dens}
\SetAlgoLined
\DontPrintSemicolon
\SetKwProg{Fn}{function}{}{end function}
\KwIn{Current Timestep $t$}
\Fn{\textsc{KT$\_$Prototype}{$()$}} {
    \Return $\dfrac{0.5}{t + 1}$\;
}
\BlankLine
\BlankLine
\setcounter{AlgoLine}{0}
\KwIn{Factor Distribution Map \texttt{fvd\_map}$\langle\phi_i, \rho^i\rangle$}
\Fn{\textsc{LogFeatureVisitDensity}{$(\bm\phi)$}} {
    $sum\_log\_rho \leftarrow 0$\;
    \For{$i=1$ \emph{\KwTo} $|\bm\phi|$} {
        \If(\tcp*[f]{$O(1)$ look-up}){$\phi_i \not\in$ \texttt{fvd\_map.keys}} {
            \texttt{fvd\_map}$[\phi_i]$ = \textsc{KT$\_$Prototype}{$()$}\;
        }
        $sum\_log\_rho \leftarrow sum\_log\_rho + \log\big(\texttt{fvd\_map}[\phi_i]\big)$\;        
    }
    \tcc*[l]{Inactive features}
    \For{$i=1$ \emph{\KwTo} \texttt{size(fvd\_map.keys)}} {
        \If(\tcp*[f]{$O(1)$ look-up with flag trick}){$\phi_i \not\in$ $\bm\phi$} {
        $sum\_log\_rho \leftarrow sum\_log\_rho + \log\big(1 - \texttt{fvd\_map}[\phi_i]\big)$\;        
        }
    }
    \Return $sum\_log\_rho$\;
}
\end{algorithm}

\subsection{Updating Factor Densities}

Recall that we use the Krichevsky-Trofimov (KT) estimator to compute the factor densities. Given a sequence of symbols, the KT-estimator computes the probability of the next symbol. For a binary symbol-set, the KT-estimator is given by.
\[Pr(x_{t+1}=1\mid x_{1:t}) = \frac{n_1 + \frac{1}{2}}{n_0 + n_1 + 1}\]
Where $n_1$ is the number of 1's seen so far in the sequence, and $n_0$ is the number of 0's seen so far.

The Blob-PROST feature set is binary valued, making the use of KT-estimators ideal. Therefore, our factor density for a feature $\phi_i$ being active is given by.
\[\rho^i(\phi_i)\equiv Pr(\phi_i=1\mid \phi_{1:t}^i)=\frac{N_t(\phi_i) + \frac{1}{2}}{t + 1}\]
And the probability for the feature being inactive is.
\[\rho^i(\phi_i=0) = 1- \rho^i(\phi_i=1)\]

Where $N_t(\phi_i)$ is the number of times feature $\phi_i$ has been seen, and $\phi_{1:t}^i$ is the complete sequence of past observations for feature $\phi_i$.

The factor density equation is neat and simple, but it requires that we maintain a count for each feature. This is an unnecessary overhead and we can do better. We now propose an update formula for $\rho^i(\phi_i)$ and derive it.
\begin{proposition}[Update formula for KT-estimate $\rho^i(\phi_i)$]
\label{prop:rho_update}
The factor distribution $\rho_t^i$ at timestep $t$ for feature $\phi_i$ can be updated using the following update formula.
\[\rho_{t+1}^i(\phi_i) = \rho_{t}^i(\phi_i)\Bigg(\frac{t+1}{t+2}\Bigg) + \frac{\phi_i}{t+2}\]
Where $\phi_i\in\{0,1\}$
\end{proposition}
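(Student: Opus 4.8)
The plan is to derive the recursion directly from the closed-form KT estimator stated just above the proposition, $\rho_t^i(\phi_i) = \frac{N_t(\phi_i) + \frac{1}{2}}{t+1}$. The only auxiliary fact I need is how the count $N_t(\phi_i)$ evolves across a single timestep: when the feature takes the binary value $\phi_i \in \{0,1\}$ at the new timestep $t+1$, the count increments by exactly that value, so $N_{t+1}(\phi_i) = N_t(\phi_i) + \phi_i$. This is the single modelling observation that makes the recursion go through; everything else is routine algebra.

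First I would write the estimator one step ahead and apply the count update,
\[\rho_{t+1}^i(\phi_i) = \frac{N_{t+1}(\phi_i) + \frac{1}{2}}{t+2} = \frac{N_t(\phi_i) + \phi_i + \frac{1}{2}}{t+2}.\]
Next I would isolate the quantity $N_t(\phi_i) + \frac{1}{2}$, since by the definition of the estimator at time $t$ it equals $(t+1)\,\rho_t^i(\phi_i)$. Substituting this back yields
\[\rho_{t+1}^i(\phi_i) = \frac{(t+1)\,\rho_t^i(\phi_i) + \phi_i}{t+2} = \rho_t^i(\phi_i)\left(\frac{t+1}{t+2}\right) + \frac{\phi_i}{t+2},\]
which is precisely the claimed formula.

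The hard part here is a matter of notation rather than mathematics: the symbol $\phi_i$ is doing double duty, serving both as the argument of the density $\rho^i$ and as the observed binary value at step $t+1$ that drives the count increment. I would make this explicit at the outset by introducing the current observation as a $\{0,1\}$-valued quantity and stating $N_{t+1}(\phi_i) = N_t(\phi_i) + \phi_i$ before proceeding. Once that is pinned down, the derivation is a two-line substitution. It is worth remarking that the recursion requires no stored count $N_t(\phi_i)$, since the previous density $\rho_t^i$ already encodes it; this is exactly the online implementation saving that the proposition is meant to justify.
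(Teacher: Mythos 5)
Your proof is correct and follows essentially the same route as the paper: both derive the recursion from the closed-form KT estimator $\rho_t^i(\phi_i) = \bigl(N_t(\phi_i)+\tfrac{1}{2}\bigr)/(t+1)$ by tracking how the count evolves over one step. The only difference is cosmetic --- the paper splits into the cases $\phi_i=1$ and $\phi_i=0$ and then recombines them, whereas you state the unified increment $N_{t+1}(\phi_i) = N_t(\phi_i) + \phi_i$ upfront, which is slightly cleaner but mathematically identical.
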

\begin{proof}
From the equation for KT-estimates of $\rho_{t}^i(\phi_i)$ we have,
\begin{align*}
    \rho_{t}^i(\phi_i) = \frac{N_t(\phi_i) + \frac{1}{2}}{t + 1}\\
    \rho_{t}^i(\phi_i)\Bigg(\frac{t+1}{t+2}\Bigg) = \frac{N_t(\phi_i) + \frac{1}{2}}{t + 2}\tag{1}
\end{align*}
In the next timestep $t+1$, depending on the value of $\phi_i$ we have two cases.
\begin{itemize}
    \item \textbf{Case 1:} Feature $\phi_i$ is active, \ie $\phi_i=1$\\
    The KT-estimate $\rho_{t+1}^i(\phi_i)$ can be written as,
    \begin{align*}
        \rho_{t+1}^i(\phi_i) &= \frac{N_{t+1}(\phi_i) + \frac{1}{2}}{(t+1) + 1}\\
        &= \frac{N_{t}(\phi_i) + 1 + \frac{1}{2}}{t + 2} \tag{Since $\phi_i=1$}\\
        &= \frac{\Big(N_{t}(\phi_i) + \frac{1}{2}\Big) + 1}{t + 2}\\
        \rho_{t+1}^i(\phi_i) &= \frac{N_{t}(\phi_i) + \frac{1}{2}}{t+2} + \frac{1}{t + 2} \tag{2}\\
    \end{align*}
    
    \item \textbf{Case 2:} $\;\;\phi_i=0$\\
    The KT-estimate $\rho_{t+1}^i(\phi_i)$ can be written as,
    \begin{align*}
        \rho_{t+1}^i(\phi_i) &= \frac{N_{t+1}(\phi_i) + \frac{1}{2}}{(t+1) + 1}\\
        &= \frac{N_{t}(\phi_i) + 0 + \frac{1}{2}}{t + 2} \tag{Since $\phi_i=0$}\\
        &= \frac{\Big(N_{t}(\phi_i) + \frac{1}{2}\Big) + 0}{t + 2}\\
        \rho_{t+1}^i(\phi_i) &= \frac{N_{t}(\phi_i) + \frac{1}{2}}{t+2} + \frac{0}{t + 2} \tag{3}\\
    \end{align*}    
\end{itemize}

In both cases, from Eq. $(2)$ and $(3)$ we can see that the value $\phi_i$ decides the existence of an additional term. Therefore by observation we can combine the two cases as follows.
\[\rho_{t+1}^i(\phi_i) = \frac{N_{t}(\phi_i) + \frac{1}{2}}{t+2} + \frac{\phi_i}{t + 2} \tag{4}\]
Therefore, from Eq. $(1)$ and $(4)$ we get,
\[\rho_{t+1}^i(\phi_i) = \rho_{t}^i(\phi_i)\Bigg(\frac{t+1}{t+2}\Bigg) + \frac{\phi_i}{t+2}\]
\end{proof}

Algorithm \ref{alg:impl_update_vd} presents the algorithm for updating the factor distributions. It uses the update formula presented in Proposition \ref{prop:rho_update} to efficiently update the factor distributions. In the implementation we can see that the update is performed in a two part manner with linear time complexity, rather than a naive double-loop search.

\begin{algorithm}[H]
\caption{Factor Distribution Update}
\label{alg:impl_update_vd}
\SetAlgoLined
\DontPrintSemicolon
\SetKwProg{Fn}{function}{}{end function}
\KwIn{Factor Distribution Map \texttt{fvd\_map}$\langle\phi_i, \rho^i\rangle$; Current Timestep $t$}
\Fn{\textsc{Update}{$(\bm\phi)$}} {
    \For{$i=1$ \emph{\KwTo} \texttt{size(fvd\_map.keys)}} {
        $\texttt{fvd\_map}[\phi_i] =  \texttt{fvd\_map}[\phi_i]\cdot\Bigg(\frac{t+1}{t+2}\Bigg)$\;
    }
    \For{$i=1$ \emph{\KwTo} $|\bm\phi|$} {
        $\texttt{fvd\_map}[\phi_i] = \texttt{fvd\_map}[\phi_i] + \frac{1}{t+2}$
    }
}
\end{algorithm}

\subsection{Exploration Bonus}

This is the entry point for our exploration strategy $\bm\phi$-EB. Due to the modular design of our algorithm, this function mostly acts like a hub that calls other functions sequentially to get the data required to calculate the exploration bonus. Algorithm \ref{alg:impl_exp_bonus} presents the implementation to calculate exploration bonus. Note that the probabilities are in log space to avoid numerical stability issues.

\begin{algorithm}[H]
\caption{Exploration Bonus}
\label{alg:impl_exp_bonus}
\SetAlgoLined
\DontPrintSemicolon
\SetKwProg{Fn}{function}{}{end function}
\KwIn{Exploration Coefficient $\beta$}
\Fn{\textsc{ExplorationBonus}{$(\bm\phi)$}} {
    $\log(\rho) \leftarrow$\textsc{LogFeatureVisitDensity}{$(\bm\phi)$}\;
    \textsc{Update}{$(\bm\phi)$}\;
    $\log(\rho') \leftarrow$\textsc{LogFeatureVisitDensity}{$(\bm\phi)$}\;
    $\hat N \leftarrow \dfrac{1}{e^{\big(\log(\rho') - \log(\rho)\big)}-1}$\tcp*[f]{Pseudo-count}\;
    \BlankLine
    \Return $\dfrac{\beta}{\sqrt{\hat N}}$\tcp*[f]{Exploration Bonus}\;
}
\end{algorithm}

\subsection{Action Selection}
\label{sec:action_selection}

In the early stages of the project, our agent was facing some inexplicable issues. It had really slow learning progress, and was getting stuck with a single action for long periods of time. Fortunately, we had rich logs that helped us in identifying a pattern to the problem.

We observed that during the initial training cycles, the value predictions from LFA had very high variance due to lack of enough samples. In cases when there was an abnormally high Q-value, our greedy optimistic agent always kept taking the same action over and over again in a loop. We initially thought that, decay in the corresponding exploration bonus, coupled with increase in optimistic estimates for other states would lead to the agent breaking out of the loop. Even though eventually the agent got out of the loop, it happened only after an exorbitantly large number of episodes. From the logs we observed that each TD update only effected a small change, and hence the reason why it took a large number of episodes to overcome abnormally high Q-value.

If we were using $\epsilon$-greedy as the exploration strategy this would not be a problem. With $\epsilon$-greedy, the agent takes more exploratory action in the initial training cycles. Even if LFA produces highly varying Q-values initially, the agent doesn't get stuck for more that a few cycles. Thus, it can be noted that random exploration at the beginning helps stabilize the action-values predicted by LFA.

Our goal is to replace $\epsilon$-greedy with our intrinsically motivated exploration strategy $\bm\phi$-EB. Unfortunately, the removal of $\epsilon$-greedy meant that the agent's policy is now deterministic and has the above debilitating side-effect. In order to solve this crippling issue we experimented two approaches.
\begin{itemize}
    \item \textbf{Combine $\bm\phi$-EB with $\epsilon$-greedy}\\
    A similar problem was reported by \cite{Bellemare2016}. Their solution was to use $\epsilon$-greedy, not as an exploration strategy, but as a tool to introduce stochasticity in the agents policy. During the initial training cycles, when there is high variance from the LFA estimates, taking a purely random action allows the agent to get out of the greedy action loop. In this experiment $\epsilon$-greedy is implemented in the usual way - with probability $\epsilon$ take a random action, and a greedy optimistic action otherwise. Algorithm \ref{alg:impl_e_greedy} presents the implementation.
    \item \textbf{Combine $\bm\phi$-EB with Boltzmann distributed action selection}\\
    One motivation for our research is to make sure that the agent does not take purely random actions. The approach from \cite{Bellemare2016} described above introduces purely random actions. We present an alternate approach which introduces stochasticity but in a directed manner.
    
    We split our optimistic $Q$ functions into two functions, $Q^{\mathcal{E}}$ and $Q^{\mathcal{I}}$. $Q^{\mathcal{E}}$ is trained using the extrinsic reward, whereas $Q^{\mathcal{I}}$ is trained on the exploration bonus from $\bm\phi$-EB\footnote{When using LFA, training is done on the LFA parameters. Therefore we essentially maintain two sets of parameters, $\bm\theta^{\mathcal{E}}$ and $\bm\theta^{\mathcal{I}}$}. The motivation here is that we now have a value-function $Q^{\mathcal{I}}$ that directs the exploratory actions of the agent. For action selection we construct the optimistic value function as the summed value function $Q=Q^{\mathcal{E}} + Q^{\mathcal{I}}$. During action selection, with probability $(1-\epsilon)$ the agent takes the action that is greedy with respect to $Q$, otherwise the agent takes a Boltzmann distributed random action. The Boltzmann distribution is constructed from the $Q^{\mathcal{I}}$ values using the \texttt{discrete\_distribution} standard library. Hence the selected random action is more likely to be an action that has higher exploratory value. Algorithm \ref{alg:impl_boltz_greedy} presents implementation for this approach.
\end{itemize}
Theoretically, the only difference between the above two approaches is the action selection process during exploration. The first approach takes a uniformly random action, whereas the second one takes a Boltzmann-distributed random action. Therefore during the implementation of the learning algorithm we implement the second approach, and swap the action selection process with the first for experimentation.
\begin{algorithm}[H]
\caption{Action Selection: $\epsilon$-Greedy}
\label{alg:impl_e_greedy}
\SetAlgoLined
\DontPrintSemicolon
\SetKwProg{Fn}{function}{}{end function}
\Fn{\textsc{NextAction}{$(Q)$}} {
    \tcc*[l]{$Q$ contains $Q$-values $\forall a\in\mathcal{A}$ for some state.}
    $a = \arg\max\limits_{a\in\mathcal{A}}Q(a)$\;
    \BlankLine
    \tcc*[l]{rand(0,1) generates random number between 0,1}
    \If{\texttt{rand(0,1)} $<\epsilon$} {
    \BlankLine
        \tcc*[l]{randInt(1,x) generates a uniformly random integer between 1,x}
        \BlankLine
        $i=$ \texttt{randInt}(1,$|\mathcal{A}|$)\;
        \Return $a_i$ \tcp*[f]{random action}\;
    }
    \Return $a$ \tcp*[f]{Greedy Optimistic action}\;
}
\end{algorithm}

\begin{algorithm}[H]
\caption{Action Selection: Boltzmann Distributed}
\label{alg:impl_boltz_greedy}
\SetAlgoLined
\DontPrintSemicolon
\SetKwProg{Fn}{function}{}{end function}
\Fn{\textsc{NextActionBoltz}{$(Q^{\mathcal{E}}, Q^{\mathcal{I}})$}} {
    \tcc*[l]{$Q^{\mathcal{E}}, Q^{\mathcal{I}}$ contains $Q$-values $\forall a\in\mathcal{A}$ for some state.}
    $a = \arg\max\limits_{a\in\mathcal{A}}\big\{Q^{\mathcal{E}}(a)+ Q^{\mathcal{I}}(a)\big\}$\;
    \BlankLine
    \tcc*[l]{rand(0,1) generates random number between 0,1}
    \If{\texttt{rand(0,1)} $<\epsilon$} {
    \BlankLine
    \tcc*[l]{boltzDistInt(W,1,x) generates an integer between 1,x that is Boltzmann distributed according to W}
    \BlankLine
        $i=$ \texttt{boltzDistInt}($Q^{\mathcal{I}}, 1,|\mathcal{A}|$)\;
        
        \Return $a_i$ \tcp*[f]{Boltzmann distributed random action}\;
    }
    \Return $a$ \tcp*[f]{Greedy Optimistic action}\;
}
\end{algorithm}

\subsection{SARSA(\texorpdfstring{$\lambda$}{lambda})+\texorpdfstring{$\bm\phi$}{phi}-EB}

Now that we have all the modules necessary for learning, we present the implementation for our agent in Algorithm \ref{alg:impl_rl_sarsa_eb}.

\begin{algorithm}[H]
\caption{Reinforcement Learning with SARSA$(\lambda)$ and $\bm\phi$-EB exploration}
\label{alg:impl_rl_sarsa_eb}
\SetAlgoLined
\DontPrintSemicolon
\KwIn{Feature Map $\bm\phi:\mathcal{S}\rightarrow \mathbb{R}^M$; Training Horizon $t_{end}$}
$t\leftarrow0$\;
\tcc*[h]{Each $\bm\theta$ are an $|\mathcal{A}|\times M_t$ matrix.}\;
Initialize arbitrary $\bm\theta_t^{\mathcal{E}},\bm\theta_t^{\mathcal{I}}$
$s_t\leftarrow$ Initial state\;
$Q_t^{\mathcal{E}} \leftarrow \bm\theta_t^{\mathcal{E}}\bm\phi(s_t)$\tcp*[f]{Vector containing $Q^\mathcal{E}$-values $\forall a\in\mathcal{A}$}\;
$Q_t^{\mathcal{I}} \leftarrow \bm\theta_t^{\mathcal{I}}\bm\phi(s_t)$\tcp*[f]{Vector containing $Q^\mathcal{I}$-values $\forall a\in\mathcal{A}$}\;
$a_t\leftarrow$ \textsc{NextActionBoltz}{$(Q_t^{\mathcal{E}}, Q_t^{\mathcal{I}}, s_t)$}\;
\While{$t<t_{end}$}{
\tcc*[l]{Re-estimate $Q$-values with updated $\bm\theta$ values.}
$Q_t^{\mathcal{E}} \leftarrow \bm\theta_t^{\mathcal{E}}\bm\phi(s_t)$\;
$Q_t^{\mathcal{I}} \leftarrow \bm\theta_t^{\mathcal{I}}\bm\phi(s_t)$\;
\BlankLine
$r_{t+1},s_{t+1} \leftarrow$ \textsc{Act}{$(a_t)$} \tcp*[h]{Perform action in ALE.}\;
$\mathcal{R}_t^\phi\leftarrow$ \textsc{ExplorationBonus}{$(s_t)$}\tcp*[h]{Compute Intrinsic reward.}\;
\BlankLine
\tcc*[l]{Predict next state $Q$-values.}
$Q_{t+1}^{\mathcal{E}} \leftarrow \bm\theta_t^{\mathcal{E}}\bm\phi(s_{t+1})$\;
$Q_{t+1}^{\mathcal{I}} \leftarrow \bm\theta_t^{\mathcal{I}}\bm\phi(s_{t+1})$\;
\BlankLine
\tcc*[l]{Boltzmann distributed action selection.}
$a_{t+1}\leftarrow$ \textsc{NextActionBoltz}{$(Q_{t+1}^{\mathcal{E}},
Q_{t+1}^{\mathcal{I}})$}\;\label{alg:line_action_sel}
\tcc*[l]{Alternatively: $\epsilon$-greedy action selection.}
\tcp*[l]{$a_{t+1}\leftarrow$ \textsc{NextAction}{$(Q_{t+1}^{\mathcal{E}} + Q_{t+1}^{\mathcal{I}})$}}
\BlankLine
\tcc*[l]{TD update}
$\delta_{t+1}^{\mathcal{E}} = r_{t+1} + \gamma Q_{t+1}^{\mathcal{E}}(a_{t+1}) - Q_t^{\mathcal{E}}(a_t)$\;
$\delta_{t+1}^{\mathcal{I}} = R_{t}^{\phi} + \gamma Q_{t+1}^{\mathcal{I}}(a_{t+1}) - Q_t^{\mathcal{I}}(a_t)$\;
$\bm\theta_{t+1}^{\mathcal{E}}\leftarrow\bm\theta_t^{\mathcal{E}} + \alpha\delta_{t+1}^{\mathcal{E}}\mathbb{I}_{|\mathcal{A}|\times M_t}$\;
$\bm\theta_{t+1}^{\mathcal{I}}\leftarrow\bm\theta_t^{\mathcal{I}} + \alpha\delta_{t+1}^{\mathcal{I}}\mathbb{I}_{|\mathcal{A}|\times M_t}$\;
$t\leftarrow t+1$\;
}
\Return $\bm\theta_{t_{end}}^{\mathcal{E}},\bm\theta_{t_{end}}^{\mathcal{I}}$\;
\footnotetext{$M_t$ is the number feature observed til timestep $t$. We have removed the details regarding eligibility traces for brevity and clarity.}
\end{algorithm}

Algorithm \ref{alg:impl_rl_sarsa_eb} presents the final version of the algorithm that we have implemented, and for which empirical results are presented. In the algorithm shown we use the Boltzmann distributed action selection approach. We can disable Line \ref{alg:line_action_sel} and enable the two comments below it to use the $\epsilon$-greedy action selection approach.

In the next chapter we discuss the experimental evaluation framework we used to perform empirical evaluation. Then we showcase the state-of-the-art results that our algorithms enjoys.

\chapter{Empirical Evaluation}
\label{cha:result}

\epigraph{\textit{'What can be asserted without evidence can also be dismissed without evidence.'}}{Christopher Hitchens}

Empirical evidence is one of the fundamental requirements for validating any scientific hypothesis. In order to validate the efficacy of our exploration algorithm, this chapter showcases empirical results that represent a significant improvement over existing algorithms.

In \cref{sec:eval_framework} we talk about the evaluation platform and the feature set that we used to evaluate our exploration strategy. \cref{sec:ale} introduces the Arcade Learning Environment (ALE) as our evaluation platform. We provides justification for choosing ALE as an environment for our agent. Further, in \cref{sec:feature_set} we introduce the Blob-PROST feature set, and the benefits our agent enjoys from using it.

In \Cref{sec:eval_methd} we provide the necessary foundations needed to evaluate Algorithm~\ref{alg:impl_rl_sarsa_eb}. We discuss the aspects that need to be considered in choosing a particular game for evaluation. Further, we talk about the parameters for empirical evaluation, such are number of trial, training frames, etc.

Lastly, in \Cref{sec:emp_results} we discuss the results of our various experiments. We compare the two action selection process discussed in \cref{sec:action_selection} and compare their empirical performance. Then we compare the learning performance of our agent with SARSA$(\lambda)$+$\epsilon$-greedy. Finally, we compare the evaluation scores for our agent with other leading algorithms.

\section{Evaluation Framework}
\label{sec:eval_framework}

\subsection{Arcade Learning Algorithm (ALE)}
\label{sec:ale}

The Arcade Learning Environment (ALE)~\citep{Bellemare2013} is a software framework that interfaces with the Stella emulator~\citep{MottBradfordWandTeam1996} for the Atari 2600 games \citep{Montfort2009}. The Atari 2600 platform contains hundreds of games that vary in many aspects of game-playing such as sports, puzzle, action, adventure, arcade, strategy etc. (\cref{fig:ale_games}). Some of the games are quite challenging for human players~\citep{Bellemare2013}. Due to the diverse nature of the games, a learning algorithm that can play the entire gamut of the Atari 2600 games can be considered to be generally competent. The goal of the ALE framework is to provide a platform for AI researchers to test their learning algorithm for general competence, share empirical data with the research community, and further the goal of achieving artificial general intelligence~\citep{Bellemare2013}.

\begin{figure}[H]
\label{fig:ale_games}
  \centering
  \includegraphics[scale=0.26]{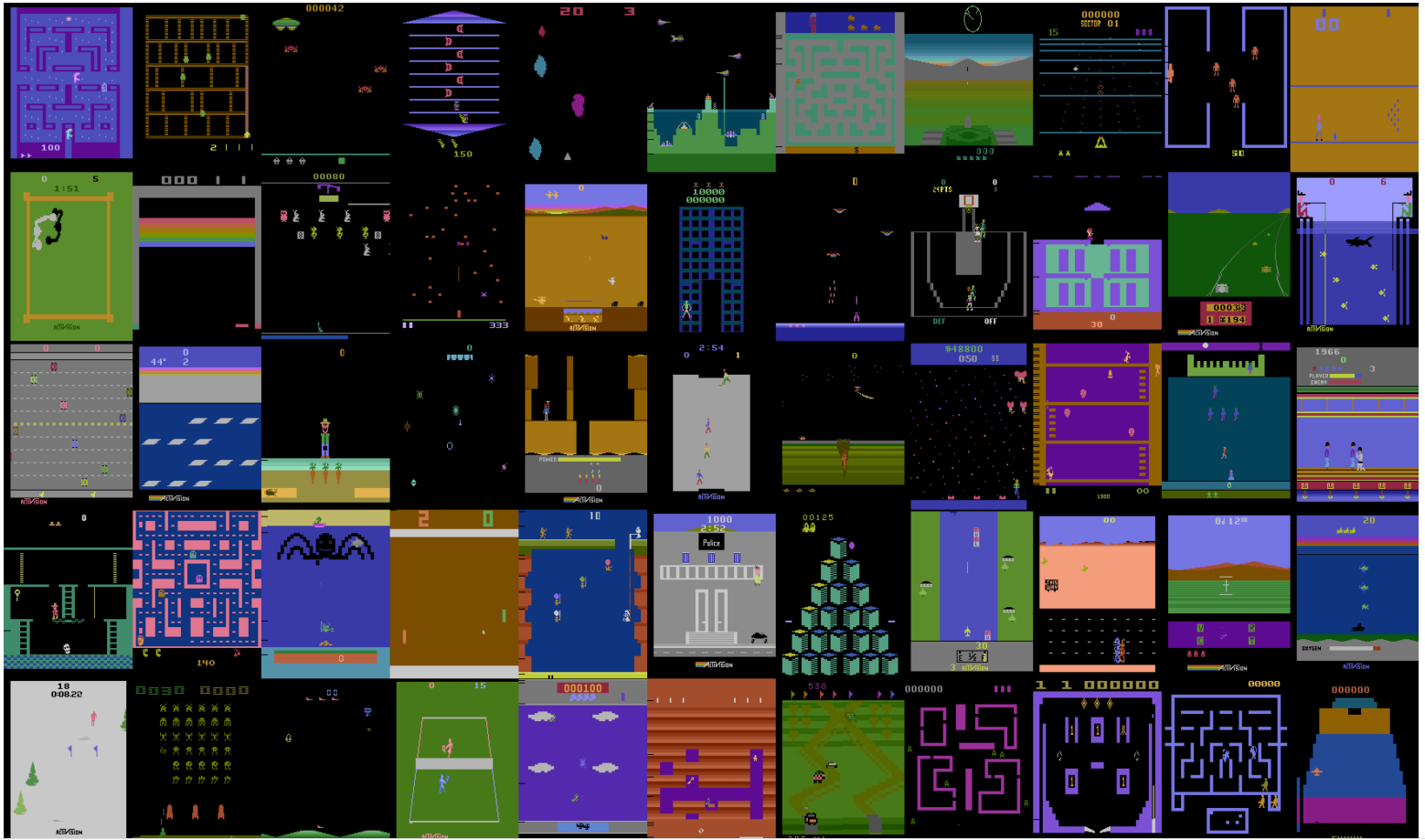}
  \caption[Game screens from 55 Atari 2600 games.]{Game screens from 55 Atari 2600 games~\citep{Defazio2014}.}
\end{figure}

\begin{figure}[H]
\label{fig:ale_rl}
  \centering
  \includegraphics[scale=0.55]{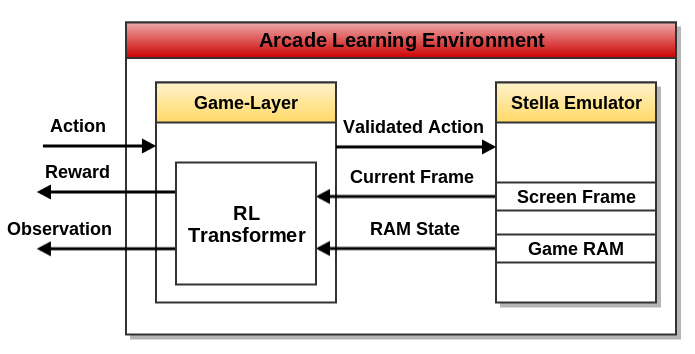}
  \caption{High level working of ALE for an RL algorithm.}
\end{figure}

The ALE contains a game-layer to facilitate reinforcement learning. The game-layer takes in the action from the agent and validates if it is one of the predefined $18$ discrete actions. The game-layer sends the validated action to the Stella emulator which performs the action on the chosen Atari 2600 game. The resulting screen frame and the RAM state is sent to the game-layer by the emulator. Each screen frame is a $160 \times 210$ 2D array, with each element representing a $7$-bit pixel. The game-layer analyses the frame and RAM state to identify the game score. It then transform the game score into an appropriate reward signal expected by an RL agent. Based on the configuration settings, the game-layer returns the frame array and/or RAM state as the observation.

After the advent of the ALE, the majority of reinforcement learning publications have used ALE to provide empirical evidence~\citep{Mnih2013,Mnih2015,Stadie2015,Osband2016a,Bellemare2016,Tang2016,Ostrovski2017}. Now, ALE enjoys the status as the standard test bed for testing RL algorithms. Therefore, in order to compare and contrast our results with existing research, it is crucial that we choose ALE as our evaluation platform.

\subsection{Blob-PROST Feature Set}
\label{sec:feature_set}

We observed breakthrough performance by DQN to achieve human level performance on majority of ALE games~\citep{Mnih2015}. Subsequently~\cite{Liang2015} did a systematic study to analyse the factors that resulted in such a dramatic increase in performance. As part of this study, they created feature sets that incorporated key representational biases encoded by DQN\@. One such data set is called Blob-PROST\@.

PROST in Blob-PROST stands for Pairwise Relative Offset in Space and Time. They argue that in most games absolute position of objects are not as important as their relative distance. Therefore, by taking into account the relative distance between two objects on screen, they are able to encode information like ``there is a green pixel 5 tiles above a blue pixel''. To encode the movements of objects, they take the relative distance of an object in the current frame to the object five frames in the past.

In an Atari game screen we can assume that there are many blocks of contiguous pixels with the same color. This is a common continuity assumption that is typically made in the context of computer vision.~\cite{Liang2015} exploits this assumption and calls such blocks \textit{blobs}. From a high level,~\cite{Liang2015} first pre-process a frame to find blobs, and then find features based on pairwise relative distances. We refer the reader to~\cite{Liang2015} for a full treatment on the construction of the feature set. 

The Blob-PROST feature set contains a total of 114,702,400 binary features. Even though there are a large number of potential features, due to the sparsity of blobs, most of the features are never generated. Also, given a specific game only a relatively small number of features would be observed. Therefore, using Blob-PROST with LFA is far more computationally efficient that DQN\@. Moreover, empirical results from~\cite{Liang2015} suggest that the fixed representations constructed in Blob-PROST has the same quality as those learned by DQN\@.

\section{Empirical Evaluation}
\label{sec:eval_methd}

The evaluation methodology was designed to investigate and answer the following research questions:
\begin{itemize}
    \item Does the novelty measure generalize state visit-counts when the generalization is performed in the same space (feature space) as the generalization regarding value?
    \item Is there performance improvement over different kinds of environments?
    \item How does the performance of our algorithm fare when compared to state-of-the-art Deep RL algorithms?
\end{itemize}

\subsection{Evaluation Methodology}
\label{sec:eval_meth}

We evaluate Algorithm~\ref{alg:impl_rl_sarsa_eb} in the ALE using the Blob-PROST feature set. We evaluate our algorithm on a subset of the games that are most relevant to the problem of efficient directed exploration. An important factor that determines our selection is the time and computational resources required to perform the evaluation. Finally, we perform hyperparameter sweeps to appropriately tune our algorithm.

\begin{figure}[H]
\label{fig:ale_five_games}
  \centering
  \includegraphics[scale=0.3855]{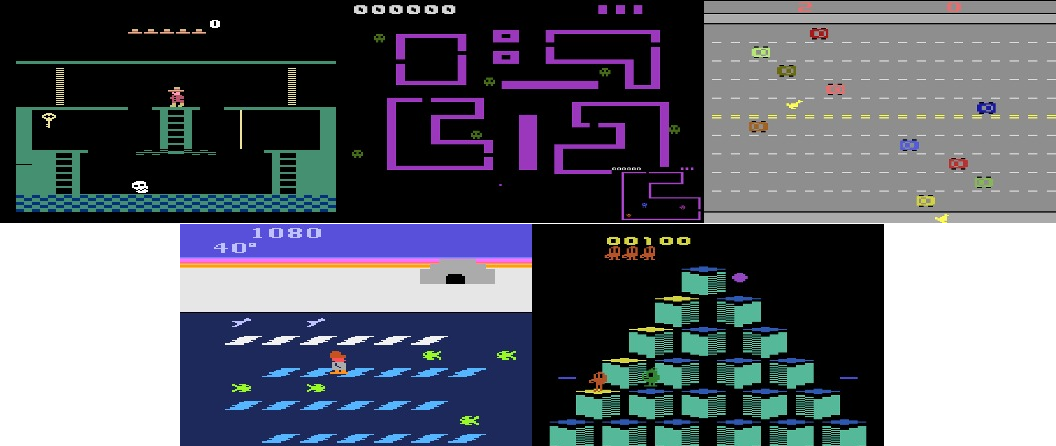}
  \caption{Five games in which exploration is both difficult and crucial to performance. From top left: Montezuma's Revenge, Venture, Freeway, Frostbite, and Q*Bert.}
\end{figure}
\subsubsection*{Choosing Evaluation Games}
ALE contains many games which vary in the degree to which exploration is difficult~\citep{Bellemare2016}. At the lower end of the difficulty spectrum are games for which undirected exploration ($\epsilon$-greedy) is sufficient to learn a good policy~\citep{Mnih2015}. We evaluate our algorithm on hard games, \ie{} games where $\epsilon$-greedy fails to improve substantially on a random policy. Hence we focus our evaluation on games that are classified as hard in the taxonomy provided by~\cite{Bellemare2016}. In their taxonomy they further split hard exploration games into games that have sparse and dense rewards. 

In dense-reward games, our RL agent can expect rewards, on average, every few cycles. Dense-reward games are easier for an RL agent because policy iteration techniques such as SARSA generally require regular feedback on the agent's policy to learn; this is known as the issue of temporal credit assignment~\citep{Sutton1998}.

On the other hand, sparse-reward games only dispense rewards infrequently. In this setting, the agent must often perform long sequences of actions in the correct order in order to receive a reward\footnote{Classic examples of such games are the board games Chess and Go; rewards are only dispensed once, at the very end of the game.}. Without a good exploration strategy, stumbling upon a productive sequence of action is very challenging. Therefore, our main focus will be on sparse-reward games in which exploration is difficult.

We compare our performance with that of  state-of-the-art Deep RL algorithms reported in the literature, most of which are variants of DQN~\citep{VanHasselt2015}. Recall that we use LFA for value prediction, while DQN uses neural networks. As discussed previously, the Blob-PROST feature set that we selected for LFA has the property that it closely models the representations learned by DQN~\citep{Liang2015}. Since the Blob-PROST feature set has this property, it is appropriate to compare our $\bm\phi$-EB algorithm to the other algorithms that use DQN\@. Now from the empirical data presented in~\cite{Liang2015}, we look for games that perform similar to DQN, and meet the exploration difficult criterion outlined above.

With these in mind, we choose the following five hard exploration games to evaluate our exploration strategy.
\begin{itemize}
    \item Sparse Reward Games (\cref{fig:ale_five_games}, Top Layer)
    \begin{itemize}
        \item \textsc{Montezuma's Revenge}
        \item \textsc{Venture}                
        \item \textsc{Freeway}
    \end{itemize}
    \item Dense Reward Games (\cref{fig:ale_five_games}, Bottom Layer)
    \begin{itemize}
        \item \textsc{Frostbite}
        \item \textsc{Q*bert}
    \end{itemize}
\end{itemize}

\subsubsection*{Computational Roadblocks}
The computational requirement for games in the ALE are very demanding, especially in the absence of graphical processing units (GPUs), which excel at the dense linear algebra computations common in  vision-related tasks~\citep{Liang2015}. Given the high-dimensional nature of the problem, agents must be trained for several days on end to obtain satisfactory performance. Frostbite and Q*bert were especially computationally intensive, and we had to train for several weeks to obtain sufficient data. Given our limited time and computational resources, we had to place the following constraints on our evaluation.

\begin{itemize}
    \item In \cref{sec:fvd_impl_det} we remarked that due to sparse nature of the feature vector our algorithm runs in $O(M_t)$ where $M_t$ is the number of unique features observed till time $t$. This means that different games run at different speeds. We trained our algorithm on all games for 100 million frames except for Q*bert which was trained only for 80 million frames.
    \item Our main focus is to showcase performance gains in sparse reward hard exploration games. Therefore we focused bulk of our computational resources into running multiple trial for Montezuma's Revenge and Venture. This is where our algorithm leads other state-of-the-art exploration strategies.
\end{itemize}

\subsubsection*{Tuning the hyper-parameter $\beta$}
\label{sec:beta_sweep}
We performed independent evaluation of all the chosen games for discrete values of $\beta$ in the range $[0.0001, 0.5]$. Recall that $\beta$ is a parameter that controls the magnitude of the exploration bonus. We observed that $\beta=0.05$ is the best performing value for all the games except for Freeway. Recall that because of the nature of the game, there is a large number of unique Blob-PROST features active. If $\beta$ is high enough, the chicken just remains stationary and receives novelty rewards for observing all the changes in the traffic. When we set $\beta=0.035$, our agent performed much better and delivered comparable results with the baseline algorithm.

\subsubsection*{Training Methodology}

Training and evaluation of learning algorithms in high-dimensional spaces is computationally demanding. Due to the constantly evolving nature of the field, there is no general consensus on how many cycles is required to train an agent. Some of the major exploration algorithms published recently report training till 200 million frames. Due to limited time and computational resources, this amount of training is not feasible for us.

We perform empirical evaluation of Montezuma's Revenge and Venture for five independent trials, and two independent trials for Frostbite, Freeway, and Q*bert. With the exception of Q*bert, all agents are trained for 100 Million frames, and then evaluated for 500 episodes. The result for Q*bert is reported after training for 80 million frames with subsequent 500 episodes of training.

We use the average score per episode, which is a common metric used to report scores~\citep{Bellemare2013,Mnih2015,Bellemare2016}.

\subsection{Sparse Reward Games}
\label{sec:sparce_rew}
\subsubsection*{Montezuma's Revenge}
Montezuma's Revenge is widely regarded as one of the most difficult games in the Atari 2600 suite. Learning algorithms typically suffer here due to the problem of long term credit assignment and sparse rewards. For example, DQN with $\epsilon$-greedy exploration achieves a score of $0$ after training for 200 million frames~\citep{Mnih2015}. In order to get the very first reward of the game the agent must climb down a ladder, jump onto the pole, jump onto a raised platform, climb down a ladder, walk left and jump to avoid an enemy, climb another ladder, and finally obtain a golden key. This long sequence of complex actions is required to simply achieve the first reward in the first of 24 rooms, arranged in a pyramid structure (\cref{fig:pyramid}). It is evident from the complexity of the game that a random exploration strategy will fail miserably. The challenges posed by this game make the game central to our evaluation.
\begin{figure}[H]
\label{fig:pyramid}
  \centering
  \includegraphics[scale=0.268]{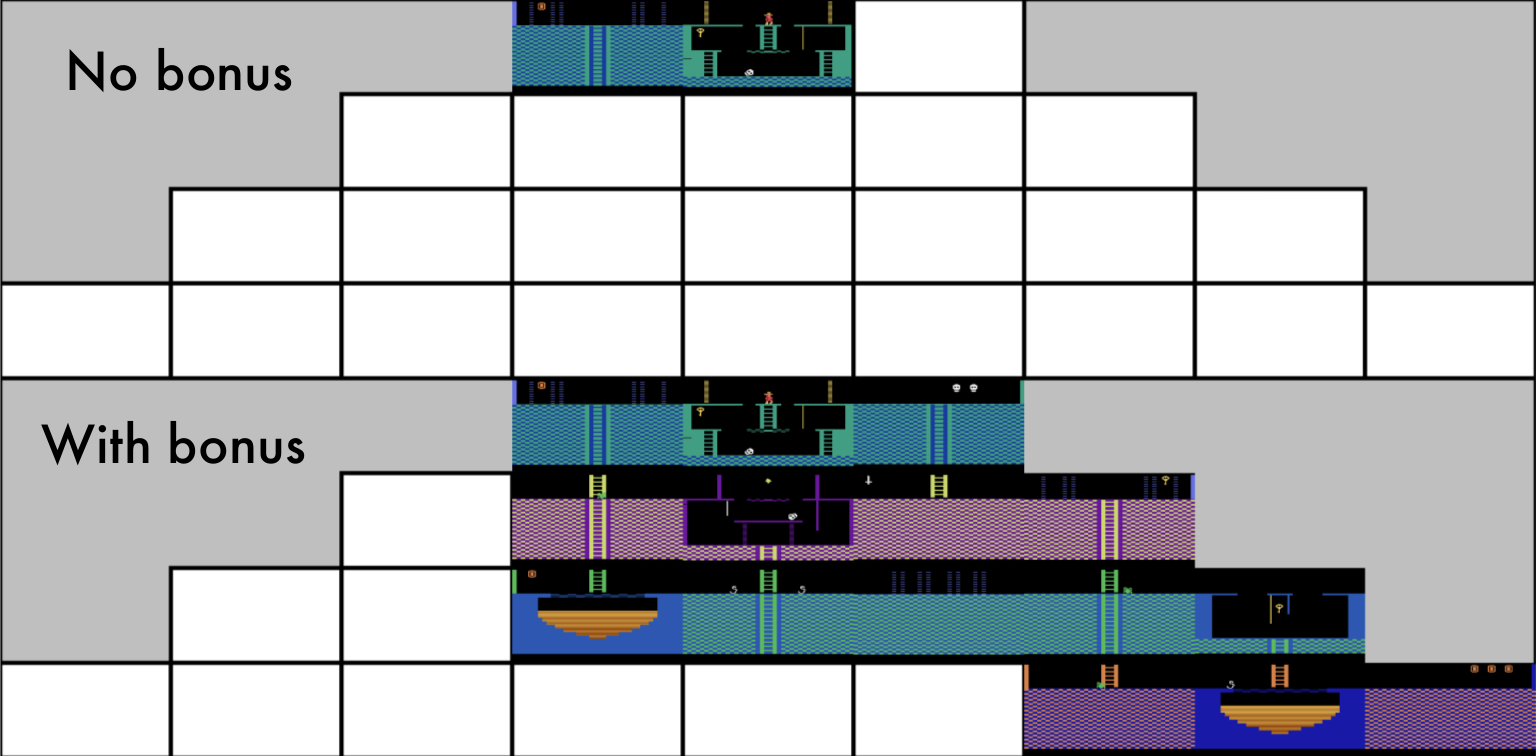}
  \caption{Montezuma's Revenge: Rooms visited by undirected exploration (DQN+$\epsilon$-greedy, above) vs.\ directed exploration (DQN+CTS-EB, below)~\citep{Bellemare2016}.}
\end{figure}

Our agent showed steady learning progress and was relatively quick to learn the initial sequence of actions needed to get the golden key. Analysis from the logs showed increasing novelty measure associated with novel events. For example, one of the rooms had a dead-end for one side, but there was a reward there. When our agent reached the dead-end and received the reward once, it kept getting stuck in that region for a few episodes. During the episodes where the agent was stuck, we could observe the novelty values of the other end slowly rising. After a few episodes the agent started going in the opposite direction based on the novelty rewards it was getting, and proceeded to explore further.

In Montezuma's revenge, the total number of rooms visited is a good measure of exploration efficiency~\citep{Bellemare2016}. Prior to the work of~\cite{Bellemare2016} no agent had visited more that 3 rooms without domain specific tailoring.~\cite{Bellemare2016}'s agent visited 15 room while our agent visited 14 rooms in total (\cref{fig:pyramid}). Both~\cite{Bellemare2016}'s and our agent enjoy a peak score of \textbf{6600}, which is the highest reported score.

\subsubsection*{Venture}
Venture is another hard, sparse reward exploration game with complex visual representations. \cref{fig:venture_img} represents the two different visual states of venture. In \cref{fig:vent_outer} the agent is depicted by the small tiny pink pixel towards the bottom middle of the screen. When the agent is in the outer level it is powerless. The only way to stay alive is to navigate the maze and/or entering one of the rooms to avoid the green goblins. Once the agent enters the room, the screen changes to \cref{fig:vent_inner}. Now our agent is transformed into a smiley face with the ability to fire a projectile weapon. Here the agent is chased by evil blue robots which can be destroyed by the agent's weapon. The small tiny cup-like structure in the bottom left corner of the room is the goal and the first reward. Destroying the blue monsters does not result in any reward. The game presents some intricate dynamics, with a large state-space to explore, and so is difficult for a naive learning algorithm. Therefore Venture is a must-have game in our evaluation roster.

\begin{figure}[H]
    \begin{minipage}{0.48\textwidth}
        \centering
        \includegraphics[scale=0.305]{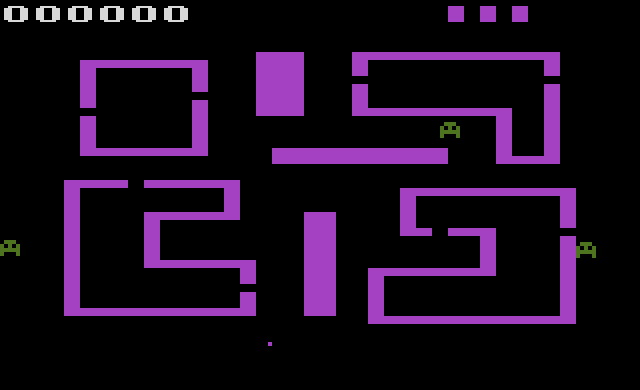}
        \caption{Venture Outer Level}\label{fig:vent_outer}
    \end{minipage}\hfill
    \begin{minipage}{0.48\textwidth}
        \centering
        \includegraphics[scale=0.305]{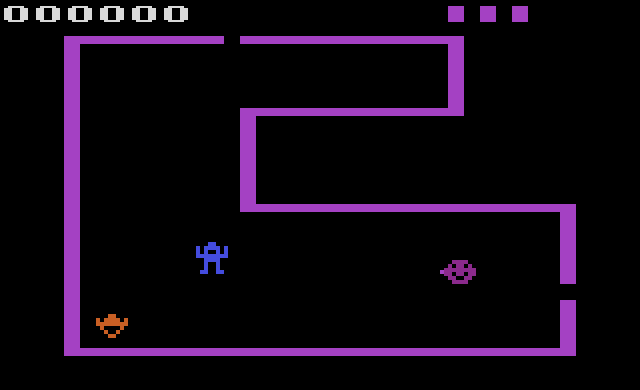}
        \caption{Venture Inner Level}\label{fig:vent_inner}
    \end{minipage}        
    \caption{Two visual states of venture (Atari 2600)}
\label{fig:venture_img}
\end{figure}

The agent initially moves around the black region to the bottom of the screen where it starts out (\cref{fig:vent_outer}). The novelty of the agent with empty black space around it quickly reduces. We observed that the agent starts to hug the outer walls of the rooms. The room walls together with the agents are novel features. Then the agent moves along the room wall towards the entry of the room. When the agent enters the room the screen now transforms as shown in \cref{fig:vent_inner}. We later observe that, with subsequent visits to the lower room our agent learned to shoot and kill the blue robots and get the  reward. Our agent achieves substantial improvement over~\cite{Bellemare2016}. Even though they do not report their evaluation score for Venture,~\cite{Ostrovski2017} reports the score of DQN+CTS-EB as $82.2$, whereas our evaluation score is $1169.2$.

\subsubsection*{Freeway}
\begin{figure}[H]
\label{fig:freeway_img}
  \centering
  \includegraphics[scale=0.33]{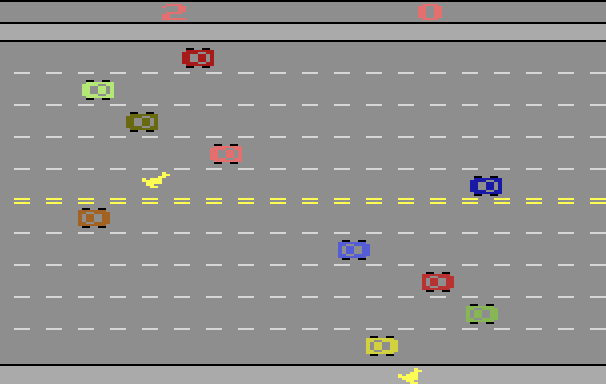}
  \caption{Freeway (Atari2600)}
\end{figure}
Here the agent is a chicken that must cross a busy highway (\cref{fig:ale_five_games} Top layer, third game). The agent receives the reward only when it reaches the other end of the road. Being hit by a car sets you back some positions down the road. Even though this is not a complex game, it is still a hard exploration game. In freeway, cars are always moving from left to right in every frame. Therefore, there is a large number of active unique Blob-PROST features.

When tuning the hyper-parameter $\beta$ we talked about the large number of unique Blob-PROST features created due to the constant movement of traffic. These changing unique features constantly floods the agent with high novelty rewards. Therefore the agent is willing to just stand and observe the traffic. When we reduced the exploration coefficient $\beta$, the agent's extrinsic reward is no longer overwhelmed by the novelty reward. Also, due to the initial optimism, the agent is encouraged to move upwards. Once the agent manages to cross the road once, it reinforces the up action and there the agent starts learning faster.

\subsection{Dense Reward Games}
\subsubsection*{Q*bert}
\begin{figure}[H]
\label{fig:qbert_img}
  \centering
  \includegraphics[scale=0.55]{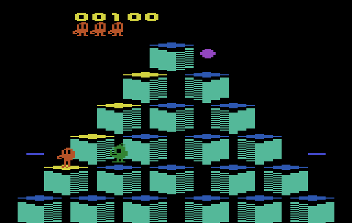}
  \caption{Qbert (Atari2600)}
\end{figure}
In Q*Bert, the agent stands on a pyramid of cubes (\cref{fig:qbert_img}). The goal of the agent is to jump on all the cubes without falling off the edge or being captured by an adversary. When the agent has highlighted all the cubes by jumping on them at least once, the level is cleared. The game has multiple levels. On higher levels, the task still remain the same, but the enemies become smarter, making it increasingly difficult to accomplish the task while avoiding capture. As discussed in \cref{sec:conf_novelty} the only other difference between levels is the choice of color scheme.


\subsubsection*{Frostbite}
\begin{figure}[H]
\label{fig:frostbite_img}
  \centering
  \includegraphics[scale=0.55]{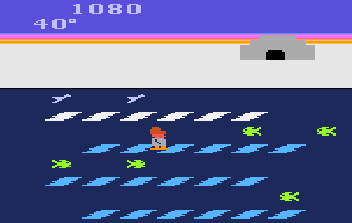}
  \caption{Frostbite (Atari2600)}
\end{figure}
In Frostbite, the agent is an Eskimo jumping up and down on an ice platform which magically results in the building of an igloo (\cref{fig:frostbite_img}). Each time the agent jumps on a pure white ice platform, the agent receives a reward. The agents has to consistently do this by avoiding obstacles/dangers, and picking up bonus points. Once the igloo has been built the agent need to perform a level-end move by entering into the igloo. Since there are multiple ways of maximizing the score, and the existence of a level-end move makes it a hard exploration game. Our main aim with this game is to confirm that our exploration strategy improves upon the baseline performance.

\section{Results}
\label{sec:emp_results}
From here on we denote our agent SARSA$(\lambda)$+$\bm\phi$-EB as SARSA-$\phi$-EB and SARSA$(\lambda)$+$\epsilon$-greedy as SARSA-$\epsilon$ for notational brevity.

\subsection{Boltzmann vs.\ \texorpdfstring{$\epsilon$}{epsilon}-greedy Action Selection}

\begin{figure}[H]
\begin{center}
\includegraphics[scale=0.46]{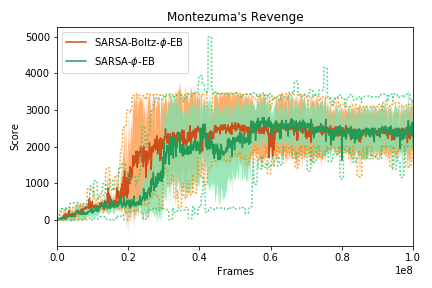}\quad
\includegraphics[scale=0.46]{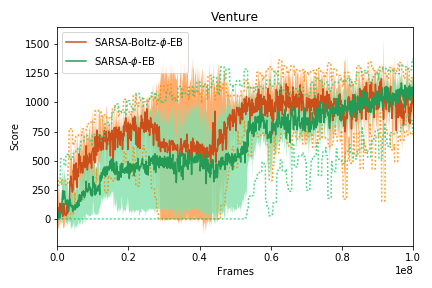}\quad \\
\end{center}
\caption{Average training score for Action Selection using Boltzmann vs.\ $\epsilon$-greedy. Shaded regions describe one standard deviation. Dashed lines represent min/max scores.}
\label{fig:plot_boltz}
\end{figure}

Recall that in \cref{sec:action_selection} we discussed the problem of action selection and presented two methods that introduce stochasticity into the agent's policy. Here we compare two agents differing only in the action selection functionality. SARSA-Boltz-$\phi$-EB introduces stochasticity  via a Boltzmann-distributed random action, whereas SARSA-$\phi$-EB does via $\epsilon$-greedy. 

\cref{fig:plot_boltz} shows the learning curve for the two agent evaluated for the games Montezuma's Revenge and Venture. From the plots we can observe that during the initial training period Boltzmann-distributed action selection has an advantage over epsilon-greedy. In the long run both have essentially the same average score. This behavior is expected since $\epsilon$ is annealed with training cycles, and hence with time the agent takes fewer purely exploratory actions.

What we hoped to see in this experiment is for the Boltzmann distributed action selection to send the agent into a steeper learning trajectory. Unfortunately, this experiment concludes that there is no long term gain to having a Boltzmann-distributed action selection process.

\subsection{Comparison with \texorpdfstring{$\epsilon$}{epsilon}-greedy}
\subsubsection*{Overview}

\cref{fig:plot_e_vs_phi} shows the comparison between learning curves for our agent SARSA-$\phi$-EB, and the benchmark implementation SARSA-$\epsilon$. The plots clearly illustrates that SARSA-$\phi$-EB significantly outperforms SARSA-$\epsilon$ on both Montezuma's Revenge and Venture; two of hardest exploration games in the ALE\@. In Q*bert and Frostbite  SARSA-$\phi$-EB consistently outperforms SARSA-$\epsilon$, but not by a huge margin. Frequent rewards from these games give a constant feedback to SARSA-$\epsilon$, helping it to chart a positive learning path. With $\beta=0.05$ freeway fails  to obtain any score, but with $\beta=0.035$ SARSA-$\phi$-EB achieved a marginally better performance that the baseline SARSA-$\epsilon$.
\begin{figure}[H]
\begin{center}
\includegraphics[scale=0.55]{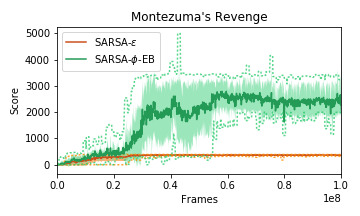}\quad
\includegraphics[scale=0.55]{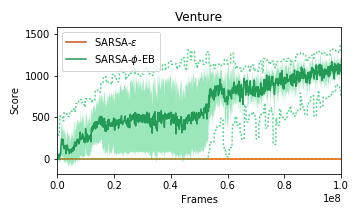}\quad \\
\includegraphics[scale=0.55]{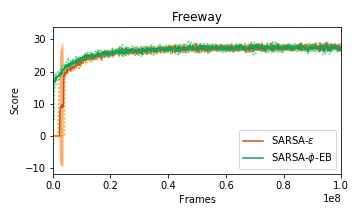}\quad
\includegraphics[scale=0.55]{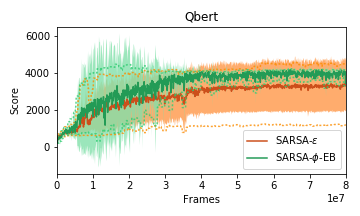}\quad\\
\includegraphics[scale=0.55]{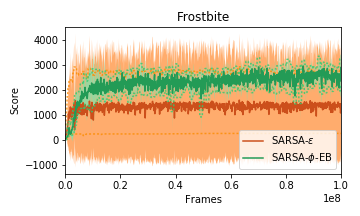}\quad
\end{center}
\caption{Average training score for SARSA-$\phi$-EB vs.\ SARSA-$\epsilon$. Shaded regions describe one standard deviation. Dashed lines represent min/max scores.~\citep{Martin2017}}
\label{fig:plot_e_vs_phi}
\end{figure}

\subsubsection*{Montezuma's Revenge}
After an average of 20 Million frames the policy of SARSA-$\epsilon$ converges. It learns the policy to consistently get the golden key, buy rarely leaves the room. Even when SARSA-$\epsilon$ leaves the room it is always to the left room and gets immediately killed by the laser beams. SARSA-$\epsilon$ never learns how to get through the laser beams.
SARSA-$\phi$-EB performs remarkably well, visiting 14 rooms in total and observing a peak score of 6600 around the 40 Million frame mark. SARSA-$\phi$-EB learns to go through laser doors by timing the movements perfectly, avoid dead-ends, duck and jump over skulls, etc. 

\subsection*{Venture}
SARSA-$\epsilon$ fails to score any points while SARSA-$\phi$-EB performs exceptionally well in Venture. It is the most impressive performance gain in this evaluation. SARSA-$\phi$-EB can quickly identify that spending time in the black region is not novel. It is attracted (because of novelty) to the walls of the room quickly. The agent then moves along the wall until it sees a nearby opposite wall or the room entrance. Once the agent starts entering a room consistently, it quickly learns how to attack blue robots and obtains the reward. The positive learning trend suggest that even higher scores are possible with more training.

\subsubsection*{Freeway}
With $\beta=0.05$ our agent SARSA-$\phi$-EB fails to cross the road for reasons previously mentioned in \cref{sec:eval_methd} and \cref{sec:sparce_rew}. But with $\beta=0.035$, SARSA-$\phi$-EB explores quickly initially and received the reward by crossing the road for the first time. Due to the undirected nature of exploration, SARSA-$\epsilon$ has to first random walk across the road to receive a reward. Once both the algorithms receive a reward they perform similarly, with SARSA-$\phi$-EB doing marginally better.\\
\\
Recall that the components of SARSA-$\phi$-EB were designed such that there is only a single point of change from SARSA-$\epsilon$. The only difference between SARSA-$\phi$-EB and SARSA-$\epsilon$ is our exploration module, $\bm\phi$-EB\@. Hence we can conclude that the empirical performance gains is solely due to the introduction of our exploration algorithm. Further we can conclude that $\bm\phi-EB$ is significantly better that $\epsilon$-greedy in sparse reward hard exploration domains, and consistently outperforms $\epsilon$-greedy in all domains.

\subsection{Comparison with Leading Algorithms}
\cref{tbl:results} shows the evaluation score for the final policy learnt by our agent. In order to compare and contrast the efficacy of our algorithm, \cref{tbl:results} also presents the evaluation scores reported by leading RL algorithms. 

Our agent reports an average evaluation score of 2745.4 on Montezuma's Revenge. This score is the second-highest reported score for Montezuma, the leading score reported for DQN+CTS-EB by~\cite{Bellemare2016}. Algorithms such as MP-EB~\citep{Stadie2015}, TRPO-Hash~\citep{Tang2016}, A3C+~\citep{Bellemare2016} does not score more than 200 points in Montezuma's Revenge, despite the presence of an exploration strategy module.

SARSA-$\phi$-EB evaluated on Venture reported a score of 1169.2, the third-highest reported score. Again it outperforms all the other exploration algorithms. \cite{Bellemare2016} does not report their score for Venture. A recent evaluation of DQN+CTS-EB done by \cite{Ostrovski2017} reported that DQN+CTS-EB obtained a score of 82.2 on venture.

It is known that non-linear algorithms perform much better on Q*bert therefore our results is not surprising \citep{Wang2016a}. Frostbite is notoriously slow to train because of the large number of features. Given the dense reward nature of the game and the small training time, we achieve competitive results.\\
\\
Our results for Montezuma and Venture are extremely competitive and can be considered state-of-the-art. The key observation from \cref{tbl:results} is this, if we look at the score of Montezuma and Venture together, ours is the only algorithm that achieve state-of-the-art results in both the games.

\begin{table}[t]
\begin{centering}
{\footnotesize{}}%
\begin{tabular}{cccccc}
 & \textbf{\scriptsize{}Venture} & \textbf{\scriptsize{}Montezuma} & \textbf{\scriptsize{}Freeway} & \textbf{\scriptsize{}Frostbite} & \textbf{\scriptsize{}Q{*}bert}\tabularnewline\addlinespace
\midrule
\midrule 
\textbf{\scriptsize{}Sarsa-$\phi$-EB} & {\scriptsize{}1169.2} & {\scriptsize{}2745.4} & {\scriptsize{}0.0} & {\scriptsize{}2770.1} & {\scriptsize{}4111.8}\tabularnewline
\midrule 
\textbf{\scriptsize{}Sarsa-$\epsilon$} & {\scriptsize{}0.0} & {\scriptsize{}399.5} & {\scriptsize{}29.9} & {\scriptsize{}1394.3} & {\scriptsize{}3895.3}\tabularnewline
\midrule 
\midrule 
\textbf{\scriptsize{}DQN+CTS-EB} & {\scriptsize{}N/A} & {\scriptsize{}3459} & {\scriptsize{}N/A} & {\scriptsize{}N/A} & {\scriptsize{}N/A}\tabularnewline
\midrule 
\textbf{\scriptsize{}A3C+} & {\scriptsize{}0} & {\scriptsize{}142} & {\scriptsize{}27} & {\scriptsize{}507} & {\scriptsize{}15805}\tabularnewline
\midrule 
\textbf{\scriptsize{}MP-EB} & {\scriptsize{}N/A} & {\scriptsize{}0} & {\scriptsize{}12} & {\scriptsize{}380} & {\scriptsize{}N/A}\tabularnewline
\midrule 
\midrule 
\textbf{\scriptsize{}DDQN} & {\scriptsize{}98} & {\scriptsize{}0} & {\scriptsize{}33} & {\scriptsize{}1683} & {\scriptsize{}15088}\tabularnewline
\midrule 
\textbf{\scriptsize{}Dueling} & {\scriptsize{}497} & {\scriptsize{}0} & {\scriptsize{}0} & {\scriptsize{}4672} & {\scriptsize{}19220}\tabularnewline
\midrule 
\textbf{\scriptsize{}DQN-PA} & {\scriptsize{}1172} & {\scriptsize{}0} & {\scriptsize{}33} & {\scriptsize{}3469} & {\scriptsize{}5237}\tabularnewline
\midrule 
\textbf{\scriptsize{}Gorila} & {\scriptsize{}1245} & {\scriptsize{}4} & {\scriptsize{}12} & {\scriptsize{}605} & {\scriptsize{}10816}\tabularnewline
\midrule 
\textbf{\scriptsize{}TRPO } & {\scriptsize{}121} & {\scriptsize{}0} & {\scriptsize{}16} & {\scriptsize{}2869} & {\scriptsize{}7733}\tabularnewline
\midrule 
\textbf{\scriptsize{}TRPO-Hash} & {\scriptsize{}445} & {\scriptsize{}75} & {\scriptsize{}34} & {\scriptsize{}5214} & {\scriptsize{}N/A}\tabularnewline
\bottomrule
\end{tabular}
\par\end{centering}{\footnotesize \par}
\caption{Average evaluation score for leading algorithms. Sarsa-$\phi$-EB
and Sarsa-$\epsilon$ were evaluated after 100M training frames
on all games except Q{*}bert, for which they trained for 80M frames.
All other algorithms
were evaluated after 200M frames.~\citep{Martin2017}}
\label{tbl:results}
\end{table}
\chapter{Conclusion and Future Work}
\label{cha:conc}

In this thesis we have presented the $\bm\phi$-Exploration Bonus ($\bm\phi$-EB) method, a novel approach to perform directed exploration in large problem domains. The algorithm is simple to implement, and is compatible with any value-based RL algorithm that uses Linear Function Approximation (LFA) to predict value. Our method also enjoys lower computational requirements when compared to other leading exploration strategies. Our empirical evaluation demonstrates that measuring novelty in feature space is a simple and effective way to drive efficient exploration on MDPs of practical interest. It also lends support to our hypothesis that defining a novelty measure in feature space is a principled way to generalize state-visit counts to large problems. In contrast to other approaches, measuring novelty in feature space avoids building an exploration-specific state-representation. Instead, our method exploits the task-relevant features that are already being used for value estimation.

There are myriad ways in which this work could be extended, and the problem of efficient exploration in large MDPs is still wide open. A promising direction for future research would be a rigorous empirical comparison of the various generalized count-based algorithms which we have discussed. At present, many of the reported results are not helpful in deciding which is the better approach to measuring novelty or generalizing visit-counts. Different exploration algorithms are presented in conjunction with totally different value-estimation methods, and it can be difficult to discern whether or not the exploration method used is responsible for the quality of the results. More theoretical understanding of the problem of exploration in the high-dimensional setting is also sorely needed, and we hope to build upon the results presented here in future work.



\backmatter

\bibliographystyle{anuthesis}
\bibliography{mendeley}

\printindex

\end{document}